\def\L{\mathcal{L}}
\def\J{\mathcal{J}}
\def\nust{\nu^{\ast}}
\def\H{\mathcal{H}}
\def\mcT{\mathcal{T}}
\def\mfN{\mathfrak{N}}
\def\mcZ{\mathcal{Z}}
\newcommand{\mathLarger}[1]{\mathlarger{\mathlarger{#1}}}
\def\dt{\tilde{d}}
\def\tA{\tilde{A}}
\def\tL{\tilde{L}}
\def\A{\mathcal{A}}
\def\F{{\mathcal{F}}}
\def\G{{\mathcal{G}}}
\def\M{\mathcal{M}}
\def\mcT{\mathcal{T}}
\def\yt{\hat{y}}
\def\Pt{\bar{P}}
\def\Bt{\hat{B}}
\def\mfN{{\mathfrak{N}}}
\def\mcM{{\mathcal{M}}}
\def\mcE{\mathcal{E}}
\def\xst{x^{\star}}
\def\tM{\tilde{\mathcal{M}}}
\def\tMz{\tilde{\mathcal{M}}^{\scaleto{\mathstrut(0)}{4.5pt}}}
\def\H{\mathcal{H}}
\def\E{\mathbb{E}}
\def\P{\mathbb{P}}
\def\P{{\mathbb{P}}}
\def\R{\mathbb{R}}
\def\tpz{\tilde{p}^{\scaleto{\mathstrut(0)}{4.5pt}}}
\def\tPhi{\tilde{\Phi}}
\def\tomega{\tilde{\omega}}
\def\T{\intercal}
\def\hst{h^{\hspace{0.1em}\mathclap{\ast}}}
\renewcommand{\hbar}{\overline{h}}
\newtheorem{theorem}{Theorem}[section]
\newtheorem{proposition}{Proposition}[section]
\newtheorem{lemma}{Lemma}[section]
\newtheorem{remark}{Remark}
\title{A Latent Variational Framework for Stochastic Optimization}
\author{%
  Philippe~Casgrain \\
  Department of Statistical Sciences\\
  University of Toronto\\
  Toronto, ON, Canada\\
  \texttt{p.casgrain@mail.utoronto.ca}
}
\begin{document}

\maketitle

\begin{abstract}
	This paper provides a unifying theoretical framework for stochastic optimization algorithms by means of a latent stochastic variational problem. Using techniques from stochastic control, the solution to the variational problem is shown to be equivalent to that of a Forward Backward Stochastic  Differential Equation (FBSDE). By solving these equations, we recover a variety of existing adaptive stochastic gradient descent methods. This framework establishes a direct connection between stochastic optimization algorithms and a secondary latent inference problem on gradients, where a prior measure on gradient observations determines the resulting algorithm.
\end{abstract}

%!TEX root = stochVarOptim.tex

\section{Introduction}
\label{sec:Introduction}

Stochastic optimization algorithms are tools which are crucial to solving optimization problems arising in machine learning. 
The initial motivation for these algorithms comes from the fact that computing the gradients of a target loss function becomes increasingly difficult as the scale and dimension of an optimization problem grows larger.
In these large-scale optimization problems, deterministic gradient-based optimization algorithms perform poorly due to the computational load of repeatedly computing gradients. Stochastic optimization algorithms remedy this issue by replacing exact gradients of the target loss with a computationally cheap gradient estimator, trading off noise in gradient estimates for computational efficiency at each step.

To illustrate this idea, consider the problem of minimizing a generic risk function $f:\R^d \rightarrow \R$, taking the form
\begin{equation} \label{eq:risk-function-motivation}
	f(x) =  \frac{1}{|\mfN|} \sum_{z\in\mfN} \ell(x;z)
	\;,
\end{equation}
where $\ell:\R^d \times \mcZ \rightarrow \R$, and where we define the set $\mfN := \{ z_i \in \mcZ \;,\; i=1,\dots, N \}$ to be a set of training points. In this definition, we interpret $\ell(x;z)$ as the model loss at a single training point $z\in\mfN$ for the parameters $x\in\R^d$. 

When $N$ and $d$ are typically large, computing the gradients of $f$ can be time-consuming. Knowing this, let us consider the path of an optimization algorithm as given by $\{x_t\}_{t\in\mathbb{N}}$. Rather than computing $\nabla f (x_t)$ directly at each point of the optimization process, we may instead collect noisy samples of gradients as
\begin{equation} \label{eq:risk-gradient-sample-motivation}
	g_t =  \frac{1}{|\mfN_t^m|} \sum_{z\in\mfN_t^m} \nabla_x \ell(x_t;z)
	\;,
\end{equation}
where for each $t$, $\mfN_t^m \subseteq \mfN$ is an independent sample of size $m$ from the set of training points. We assume that $m \ll N$ is chosen small enough so that $g_t$ can be computed at a significantly lower cost than $\nabla f(x_t)$. Using the collection of noisy gradients $\{g_t\}_{t\in\mathbb{N}}$, stochastic optimization algorithms construct an estimator $\widehat{\nabla f}(x_t)$ of the gradient $\nabla f(x_t)$ in order to determine the next step $x_{t+1}$ of the optimizer.

This paper presents a theoretical framework which provides new perspectives on stochastic optimization algorithms, and explores the implicit model assumptions that are made by existing ones. We achieve this by extending the 
 approach taken by~\citet{wibisono2016variational} to stochastic algorithms. The key step in our approach is to interpret the task of optimization with a stochastic algorithm as a latent variational problem. As a result, we can recover algorithms from this framework which have built-in online learning properties. In particular, these algorithms use an online Bayesian filter on the stream of noisy gradient samples, $g_t$, to compute estimates of $\nabla f(x_t)$. Under various model assumptions on $\nabla f$ and $g$, we recover a number of common stochastic optimization algorithms. 

% This paper interprets the problem of stochastic optimization as equivalent to the problem of optimizing a latent function. More specifically, we assume that optimizers cannot directly observe the loss function~$f(x)$. Instead, an optimizer's available set of information consists solely of the collection of noisy gradient samples, $\{ g_t \}$, that is accumulated over the course of the optimization. By specifying a variational problem over optimizers with this information restriction feature and by placing a prior measure over possible loss functions, we find that optimizers which are solutions to this variational problem naturally generate bayesian gradient estimators $\widehat{\nabla f}(x_t)$ within their dynamics. Moreover, the optimizer's behaviour is entirely determined by only a handful of model parameters, including the assumed model prior and the assumed model for noisy observations. From this framework, we demonstrate that under simple models, we can recover a number of optimization algorithms, demonstrating that \emph{these are in fact variational}.

\subsection{Related Work}

There is a rich literature on stochastic optimization algorithms as a consequence of their effectiveness in machine learning applications. Each algorithm introduces its own variation on the gradient estimator ${\widehat{\nabla f}(x_t)}$ as well as other features which can improve the speed of convergence to an optimum. Amongst the simplest of these is \emph{stochastic gradient descent} and its variants~\citet{robbins1951stochastic}, which use an estimator based on single gradient samples. Others, such as~\citet{lucas2018aggregated,nesterov27method}, use momentum and acceleration as features to enhance convergence, and can be interpreted as using exponentially weighted moving averages as gradient estimators. Adaptive gradient descent methods such as AdaGrad from~\citet{duchi2011adaptive} and Adam from~\citet{kingma2014adam} use similar moving average estimators, as well as dynamically updated normalization factors. For a survey paper which covers many modern stochastic optimization methods, see~\citet{ruder2016overview}.

There exist a number of theoretical interpretations of various aspects of stochastic optimization. \cite{cesa2004generalization} have shown a parallel between stochastic optimization and online learning. Some previous related works, such as~\citet{gupta2017unified} provide a general model for adaptive methods, generalizing the subgradient projection approach of~\cite{duchi2011adaptive}. 
\citet{aitchison2018unified} use a Bayesian model to explain the various features of gradient estimators used in stochastic optimization algorithms . This paper differs from these works by naturally generating stochastic algorithms from a variational principle, rather than attempting to explain their individual features. This work is most similar to that of \citet{wibisono2016variational} who provide a variational model for continuous deterministic optimization algorithms.

There is a large body of research on continuous-time approximations to deterministic optimization algorithms via dynamical systems (ODEs) (\cite{su2014differential,krichene2015accelerated,wilson2016lyapunov,da2018general}), as well as approximations to stochastic optimization algorithms by stochastic differential equations (SDEs) (\cite{xu2018accelerated,xu2018continuous,raginsky2012continuous,mertikopoulos2018convergence,krichene2017acceleration}). In particular, the most similar of these works,~\citet{raginsky2012continuous,xu2018accelerated,xu2018continuous}, study continuous approximations to stochastic mirror descent by adding exogenous Brownian noise to the continuous dynamics derived in~\citet{wibisono2016variational}. This work differs by deriving continuous stochastic dynamics for optimizers from a broader theoretical framework, rather than positing the continuous dynamics as-is. Although the equations studied in these papers may resemble some of the results derived in this one, they differ in a number of ways. Firstly, this paper finds that the source of randomness present in the optimizer dynamics obtained in this paper are not generated by an exogenous source of noise, but are in fact an explicit function of the randomness generated by observed stochastic gradients during the optimization process. Another important difference is that the optimizer dynamics presented in this paper make no use of the gradients of the objective function, $\nabla f$ (which is inaccessible to a stochastic optimizer), and are only a function of the stream of stochastic gradients $g_t$.

\subsection{Contribution}

To the author's knowledge, this is the first paper to produce a theoretical model for stochastic optimization based on a variational interpretation. This paper extends the continuous variational framework~\cite{wibisono2016variational} to model stochastic optimization. From this model, we derive optimality conditions in the form of a system of forward-backward stochastic differential equations (FBSDEs), and provide bounds on the expected rate of convergence of the resulting optimization algorithm to the optimum. By discretizing solutions of the continuous system of equations, we can recover a number of well-known stochastic optimization algorithms, demonstrating that these algorithms can be obtained as solutions of the variational model under various assumptions on the loss function, $f(x)$, that is being minimized.

\subsection{Paper Structure}

In Section~\ref{sec:The-Optimization-Model} we define a continuous-time surrogate model of stochastic optimization. Section~\ref{sec:optimizer-variational-problem} uses this model to motivate a stochastic variational problem over optimizers, in which we search for stochastic optimization algorithms which achieve optimal average performance over a collection of minimization problems. In Section~\ref{sec:Critical-Points-Action} we show that the necessary and sufficient conditions for optimality of the variational problem can be expressed as a system of Forward-Backward Stochastic Differential Equations. Theorem~\ref{thm:convergence-rate} provides rates of convergence for the optimal algorithm to the optimum of the minimization problem. Lastly, Section~\ref{sec:Recovering-Algorithms} recovers SGD, mirror descent, momentum, and other optimization algorithms as discretizations of the continuous optimality equations derived in Section~\ref{sec:Critical-Points-Action} under various model assumptions. The proofs of the mathematical results of this paper are found within the appendices.

\section{A Statistical Model for Stochastic Optimization}
\label{sec:The-Optimization-Model}

Over the course of the section, we present a variational model for stochastic optimization. The ultimate objective will be to construct a framework for measuring the average performance of an algorithm over a random collection of optimization problems.
 % where $f$ is assumed to be a single draw from a random variable which generates differentiable loss functions.
We define random variables in an ambient probability space $\smash{(\Omega,\P,\mathfrak{G}=\{\G_t\}_{t \in [0,T]})}$, where $\G_t$ is a filtration which we will define at a later point in this section. 
We assume that loss functions are drawn from a random variable $f:\Omega\rightarrow C^1(\R^d)$. Each draw from the random variable satisfies $f(x)\in\R$ for fixed $x\in\R^d$, and $f$ is assumed to be an almost-surely continuously differentiable in $x$. In addition, we make the technical assumption that $\E\, \lVert \nabla f(x) \rVert^2 < \infty$ for all $x\in\R^d$.

We define an optimizer $X=(X_t^\nu)_{t\geq 0}$ as a controlled process satisfying $X_t^{\nu}\in\R^d$ for all $t\geq 0$, with initial condition $X_0\in\R^d$. The paths of $X$ are assumed to be continuously differentiable in time so that the dynamics of the optimizer may be written as $d X_t^{\nu} = \nu_t \,dt$, where $\nu_t \in \R^d$ represents the control, where we use the superscript to express the explicit dependence of $X^{\nu}$ on the control $\nu$. We may also write the optimizer in its integral form as $X_t^{\nu} = X_0 + \int_0^t \nu_u \, du$, demonstrating that the optimizer is entirely characterized by a pair $(\nu,X_0)$ consisting of a control process $\nu$ and an initial condition $X_0$. Using an explicit Euler discretization with step size $\epsilon>0$, the optimizer can be approximately represented through the update rule $X_{t+\epsilon}^{\nu} \approx X^{\nu}_t + \epsilon \, \nu_t$. This leads to the interpretation of $\nu_t$ as the (infinitesimal) step the algorithm takes at each point $t$ during the optimization process.

In order to capture the essence of stochastic optimization, we construct our model so that optimizers have restricted access to the gradients of the loss function $f$. Rather than being able to directly observe $\nabla f$ over the path of $X_t^{\nu}$, we assume that the algorithm may only use a noisy source of gradient samples, modeled by a {c\`adl\`ag semi-martingale}\footnote{A \emph{c\`adl\`ag} (continue \`a droite, limite \`a gauche) process is a continuous time process that is almost-surely right-continuous with finite left limit at each point t. A \emph{semi-martingale} is the sum of a process of finite variation and a local martingale. For more information on continuous time stochastic processes and these definitions, see the canonical text~\citet{jacod2013limit}.}
$g=\left( g_t \right)_{t \geq 0}$. As a simple motivating example, we can consider the model $g_t = \nabla f(X_t^{\nu}) + \xi_t$, where $\xi_t$ is a white noise process. This particular model for the noisy gradient process can be interpreted as consisting of observing $\nabla f(X_t^{\nu})$ plus an independent source of noise. This concrete example will be useful to keep in mind to make sense of the results which we present over the course of the paper.

To make the concept of information restriction mathematically rigorous, we restrict ourselves only to optimizers $X^{\nu}$ which are measurable with respect to the information generated by the noisy gradient process $g$. To do this, we first define the global filtration $\G$, as $\G_t = \sigma\left( (g_u)_{u\in[0,t]} , f \right)$ as the sigma algebra generated by the paths of $g$ as well as the realizations of the loss surface $f$. The filtration $\G_t$ is defined so that it contains the complete set of information generating the optimization problem until time $t$.

Next, we define the coarser filtration $\F_t = \sigma(g_u)_{u\in[0,t]}\subset \G_t$ generated strictly by the paths of the noisy gradient process. This filtration represents the total set of information \emph{available to the optimizer} up until time $t$. This allows us to formally restrict the flow of information to the algorithm by restricting ourselves to optimizers which are adapted to $\F_t$. More precisely, we say that the optimizer's control $\nu$ is admissible if \useshortskip
\begin{equation}
	\nu \in \A := \left\{ 
	\omega = \left( \omega_t \right)_{t \geq 0}
	\, : \;
	\omega \text{ is $\F$-adapted}
	\, , \;
	\E\int_0^T \, \lVert \omega_t \lVert^2 + \lVert \nabla f( X^{\omega}_t ) \lVert^2 \, dt < \infty
	\right\}
	\;.
\end{equation}
% where we require an additional integrability constraint.
The set of optimizers generated by $\A$ can be interpreted as the set of optimizers which may only use the source of noisy gradients, which have bounded expected travel distance and have square-integrable gradients over their path.

\section{The Optimizer's Variational Problem}
\label{sec:optimizer-variational-problem}

Having defined the set of admissible optimization algorithms, we set out to select those which are optimal in an appropriate sense. We proceed similarly to~\citet{wibisono2016variational}, by proposing an objective functional which measures the performance of the optimizer over a finite time period. 

The motivation for the optimizer's performance metric comes from a physical interpretation of the optimization process. We can think of our optimization process as a particle traveling through a potential field define by the target loss function $f$. As the particle travels through the potential field, it may either gain or lose momentum depending on its location and velocity, which will in turn affect the particle's trajectory. Naturally, we may seek to find the path of a particle which reaches the optimum of the loss function while minimizing the total amount of kinetic and potential energy that is spent. We therefore turn to the Lagrangian interpretation of classical mechanics, which provides a framework for obtaining solutions to this problem. Over the remainder of this section, we lay out the Lagrangian formalism for the optimization problem we defined in Section~\ref{sec:The-Optimization-Model}.

To define a notion of energy in the optimization process, we provide a measure of distance in the parameter space. We use the \emph{Bregman Divergence} as the measure of distance within our parameter space, which can embed additional information about the geometry of the optimization problem.
The Bregman divergence, $D_h$, is defined as
\begin{equation} \label{eq:Bregman-Divergence}
	D_h(y,x) = h(y) - h(x) - \langle \nabla h(x), y-x \rangle
\end{equation}
where $h:\R^d \rightarrow \R$ is a strictly convex function satisfying $h\in C^2$. We assume here that the gradients of $h$ are $L$-Lipschitz smooth for a fixed constant $L>0$. The choice of $h$ determines the way we measure distance, and is typically chosen so that it mimics features of the loss function $f$. In particular, this quantity plays a central role in mirror descent and non-linear sub-gradient algorithms. For more information on this connection and on Bregman Divergence, see~\citet{nemirovsky1983problem} and~\citet{beck2003mirror}.

We define the total energy in our problem as the kinetic energy, accumulated through the movement of the optimizer, and the potential energy generated by the loss function $f$. Under the assumption that $f$ almost surely admits a global minimum $\xst=\arg\min_{x\in\R^d} f(x)$, we may represent the total energy via the Bregman Lagrangian as
\begin{equation} \label{eq:Bregman-Lagrangian}
	\L(t,X,\nu) = 
	e^{\gamma_t} 
	{ (}
	\underbrace{ \vphantom{\int} e^{\alpha_t} D_h\left( X + e^{-\alpha_t} \nu , X \right)}_{\mathclap{\text{Kinetic Energy}}} - 
	\underbrace{ \vphantom{\int} e^{\beta_t} \left(f(X) - f(\xst)\right) }_{\mathclap{\text{Potential Energy}}} 
	{ )}
	\;,
\end{equation}
for fixed inputs $(t,X,\nu)$, and where we assume that $\gamma,\alpha,\beta: \R^+ \rightarrow \R$ are deterministic, and satisfy $\gamma,\alpha,\beta \in C^1$. The functions $\gamma,\alpha,\beta$ can be interpreted as hyperparameters which tune the energy present at any state of the optimization process. An important property to note is that the Lagrangian is itself a random variable due to the randomness introduced by the latent loss function $f$.

The objective is then to find an optimizer within the admissible set $\A$ which can get close to the minimum $\xst=\min_{x\in\R^d} f(x)$, while simultaneously minimizing the energy cost over a finite time period $[0,T]$. The approach taken in classical mechanics and in~\citet{wibisono2016variational} fixes the endpoint of the optimizer at $\xst$. Since we assume that the function $f$ is not directly visible to our optimizer, it is not possible to add a constraint of this type that will hold almost surely. Instead, we introduce a soft constraint which penalizes the algorithm's endpoint in proportion to its distance to the global minimum, $f(X_T)-f(\xst)$. As such, we define the \emph{expected action functional} $\J:\A \rightarrow \R$ as
\begin{equation} \label{eq:Expected-Action-Functional}
	\J(\nu) = 
	\E {\Big [}
	\;
	\underbrace{
	\int_0^T 
	% \left\{
	\L(t,X_t^\nu,\nu_t)
	% \right\}
	\, dt
	}_{\text{Total Path Energy}}
	+
	\underbrace{ \vphantom{\int_0}
	e^{\delta_T} \left( \vphantom{\sum} f(X_T^{\nu}) - f(\xst)\right)
	}_{\text{Soft End Point Constraint}}
	{\Big ]}
	\;,
\end{equation}
where $\delta_T \in C^1$ is assumed to be an additional model hyperparameter, which controls the strength of the soft constraint.

With this definition in place, the objective will be to select amongst admissible optimizers for those which minimize the expected action. Hence, we seek optimizers which solve the stochastic variational problem
\begin{equation} \label{eq:Optimization-Problem}
	\nust = \arg\min_{\nu\in\A} \J(\nu)
	\;.
\end{equation}

\begin{remark}
	Note that the variational problem~\eqref{eq:Optimization-Problem} is identical to the one with Lagrangian
	\begin{equation}
	\tilde{\L}(t,X,\nu) = 
	e^{\gamma_t} 
	(e^{\alpha_t} D_h\left( X + e^{-\alpha_t} \nu , X \right) - 
	e^{\beta_t} f(X) 
	)
	\end{equation}
	and terminal penalty $e^{\delta_T} f(X_T^{\nu})$, since they differ by constants independent of $\nu$. Because of this, the results presented in Section~\ref{sec:Critical-Points-Action} also hold the case where $\xst$ and $f(\xst)$ do not exist or are infinite.
\end{remark}

% The optimization problem~\eqref{eq:Optimization-Problem} presents a number of technical difficulties which prevent us from passing it through the standard machinery of Lagrangian and Hamiltonian mechanics or of stochastic control. The main hurdle stems from the measureability restriction of $\nu$ to $\F_t$, which introduces an asymmetry between the control's filtration $\F_t$ and the filtration generating dynamics of the optimization problem, $\G_t$. 

\section{Critical Points of the Expected Action Functional}
\label{sec:Critical-Points-Action}

In order to solve the variational problem~\eqref{eq:Optimization-Problem}, we make use techniques from the calculus of variations and infinite dimensional convex analysis to provide optimality conditions for the variational problem~\eqref{eq:Optimization-Problem}. To address issues of information restriction, we rely on the stochastic control techniques developed by~\citet{casgrain2018algorithmic,casgrain2018trading,casgrain2018mean}.

The approach we take relies on the fact that a necessary condition for the optimality of a G\^ateaux differentiable functional $\J$ is that its G\^ateaux derivative vanishes in all directions. Computing the G\^ateaux derivative of $\J$, we find an equivalence between the G\^ateaux derivative vanishing and a system of Forward-Backward Stochastic Differential Equations (FBSDEs), yielding a generalization of the Euler-Lagrange equations to the context of our optimization problem. The precise result is stated in Theorem~\ref{thm:Stochastic-Euler-Lagrange} below.

\begin{theorem}[Stochastic Euler-Lagrange Equation]
\label{thm:Stochastic-Euler-Lagrange}
A control $\nust\in\A$ is a critical point of $\J$ if and only if $((\frac{\partial \L}{\partial \nu}), \mcM)$ is a solution to the system of FBSDEs,
\begin{equation} \label{eq:Thm-Optimality-BSDE}
		d \left(\frac{\partial \L}{\partial \nu}\right)_t = \E\left[ \left(\frac{\partial \L}{\partial X}\right)_t {\Big \lvert} \F_t \right] \, dt + d\mcM_t
		\;\; \forall t<T \,,\;\;
		\left(\frac{\partial \L}{\partial \nu}\right)_{T} = 
		-e^{\delta_T} \, \E\left[ \nabla f(X_T) {\Big \lvert} \F_T  \right]
		\;,
\end{equation}
where we define the processes
\begin{align}
	\label{eq:BSDE-dLdX}
	{
	\left(\frac{\partial \L}{\partial X}\right)_t }
	&=
	e^{\gamma_t+\alpha_t} \mathLarger{(} \,
	\nabla h(X_t^{\nust} + e^{-\alpha_t} \nust_t )  - \nabla h(X_t^{\nust})
	% \\ & \nonumber \hspace{3cm} 
	-  e^{-\alpha_t} \nabla^2 h(X_t^{\nust}) \nust_t  - e^{\beta_t} \nabla f(X_t^{\nust})
	\, \mathLarger{)} 
	\\
	\vphantom{\mathLarger{{\int}}}
	\left(\frac{\partial \L}{\partial \nu}\right)_t
	&=
	e^{\gamma_t} \left(
	\nabla h(X_t^{\nust} + e^{-\alpha_t} \nust_t ) - \nabla h(X_t^{\nust})
	\right)
	\;, \label{eq:BSDE-dLdnu}
\end{align} 
and where the process $\M=(\M_t)_{t\in[0,T]}$ is an $\F$-adapted martingale. As a consequence, if the solution to this FBSDE is unique, then it is the unique critical point of the functional $\J$ up to null sets.
\end{theorem}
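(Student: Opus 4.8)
The plan is to characterize critical points through the vanishing of the G\^ateaux derivative of $\J$ in every admissible direction, and then to translate this variational condition into the stated FBSDE by exploiting the linear dependence of $X^\nu$ on $\nu$ together with the information constraint encoded in $\F$. First I would fix $\nust\in\A$ and a perturbation direction $\omega$ (drawn from a suitably rich subclass of $\A$, e.g.\ bounded $\F$-adapted processes), and note that $X^{\nust+\epsilon\omega}_t = X^{\nust}_t + \epsilon\int_0^t\omega_u\,du =: X^{\nust}_t + \epsilon Y^\omega_t$. Differentiating $\J(\nust+\epsilon\omega)$ at $\epsilon=0$ --- justifying the interchange of derivative, expectation, and time integral using the $L$-Lipschitz smoothness of $\nabla h$, the bound $\E\lVert\nabla f(x)\rVert^2<\infty$, and the $L^2$ admissibility conditions --- yields
\begin{equation*}
\langle\J'(\nust),\omega\rangle = \E\Big[\int_0^T \Big( \big(\tfrac{\partial\L}{\partial X}\big)_t\cdot Y^\omega_t + \big(\tfrac{\partial\L}{\partial\nu}\big)_t\cdot\omega_t\Big)\,dt + e^{\delta_T}\,\nabla f(X^{\nust}_T)\cdot Y^\omega_T\Big],
\end{equation*}
where $\big(\tfrac{\partial\L}{\partial X}\big)_t$ and $\big(\tfrac{\partial\L}{\partial\nu}\big)_t$ are exactly the processes in~\eqref{eq:BSDE-dLdX}--\eqref{eq:BSDE-dLdnu}, obtained by chain-rule differentiation of the Bregman Lagrangian and the terminal penalty.

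The second step exploits the information restriction. Since $\omega$ is $\F$-adapted, both $Y^\omega_t$ and $\omega_t$ are $\F_t$-measurable, so by the tower property each factor multiplying them may be replaced by its $\F_t$-conditional expectation; this is the mechanism that introduces $\E[\,\cdot\mid\F_t]$ into the drift and terminal condition. I would then apply a stochastic Fubini argument to rewrite $\E\int_0^T(\tfrac{\partial\L}{\partial X})_t\cdot Y^\omega_t\,dt = \E\int_0^T\big(\int_u^T(\tfrac{\partial\L}{\partial X})_t\,dt\big)\cdot\omega_u\,du$ and to absorb the terminal term, collecting everything against $\omega_u$. After conditioning on $\F_u$, the criticality condition $\langle\J'(\nust),\omega\rangle=0$ for all $\omega$ becomes $\E\int_0^T (P_u+Q_u)\cdot\omega_u\,du=0$, where $P_u=(\tfrac{\partial\L}{\partial\nu})_u$ and $Q_u=\E[\int_u^T(\tfrac{\partial\L}{\partial X})_t\,dt + e^{\delta_T}\nabla f(X^{\nust}_T)\mid\F_u]$.

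The third step is the fundamental lemma of the calculus of variations in the Hilbert space of $\F$-adapted square-integrable processes: since the above holds for every admissible $\omega$, we conclude $P_u = -Q_u$ for $dt\otimes\P$-almost every $(u,\omega)$. It then remains to recognize this identity as the FBSDE. Writing $M_u := \E[\int_0^T(\tfrac{\partial\L}{\partial X})_t\,dt + e^{\delta_T}\nabla f(X^{\nust}_T)\mid\F_u]$, which is an $\F$-martingale by the tower property, and using that $(\tfrac{\partial\L}{\partial X})_t$ is already $\F_t$-adapted to split $Q_u = M_u - \int_0^u(\tfrac{\partial\L}{\partial X})_t\,dt$, differentiation gives $dP_u = -dQ_u = \E[(\tfrac{\partial\L}{\partial X})_u\mid\F_u]\,du - dM_u$; setting $\mcM := -M$ produces~\eqref{eq:Thm-Optimality-BSDE} together with the terminal condition $P_T=-Q_T=-e^{\delta_T}\E[\nabla f(X^{\nust}_T)\mid\F_T]$. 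Each of these steps is reversible, which yields the converse implication, and the bijection between FBSDE solutions and critical points established along the way gives the final uniqueness claim.

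The hardest part, and the one I would treat most carefully, is the rigorous handling of the partial-information structure rather than the formal calculus: I must verify that the conditional expectations are well defined and that the residual process $\mcM$ is genuinely an $\F$-adapted martingale (and not merely $\G$-adapted), which is precisely where the stochastic-control techniques under restricted filtrations of~\citet{casgrain2018algorithmic,casgrain2018mean} are needed. Secondary technical obstacles are (i) justifying G\^ateaux differentiability and the dominated-convergence interchanges under the stated integrability and Lipschitz hypotheses, and (ii) ensuring the perturbation class is a linear subspace dense enough to invoke the fundamental lemma while keeping $\nust+\epsilon\omega$ admissible --- the nonlinear admissibility term $\E\int_0^T\lVert\nabla f(X^\omega_t)\rVert^2\,dt$ must be controlled, for instance by first restricting to bounded directions and then extending by density.
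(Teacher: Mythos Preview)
Your overall strategy is essentially the paper's: compute the G\^ateaux derivative, use Fubini to collect terms against $\omega$, invoke the tower property to insert the $\F_t$-conditional expectation (legitimate because $\omega$ is $\F$-adapted), and then apply the fundamental lemma to force the integrand to vanish. The paper packages the first two steps as a lemma giving exactly your expression for the G\^ateaux derivative, and its necessity argument is just the concrete instance of the fundamental-lemma step in which one takes the residual itself as the test direction.

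There is, however, a genuine error in your passage from the integral identity $P_u=-Q_u$ to the differential form~\eqref{eq:Thm-Optimality-BSDE}. You assert that $\big(\tfrac{\partial\L}{\partial X}\big)_t$ is $\F_t$-adapted and use this to write $Q_u = M_u - \int_0^u\big(\tfrac{\partial\L}{\partial X}\big)_t\,dt$. This is false: by~\eqref{eq:BSDE-dLdX} the process contains $e^{\gamma_t+\alpha_t+\beta_t}\nabla f(X_t^{\nust})$, and $f$ is a latent random element measurable with respect to $\G$ but not $\F$ --- indeed, this is the entire point of the information restriction, and is exactly why the FBSDE drift is $\E\big[(\tfrac{\partial\L}{\partial X})_t\mid\F_t\big]$ rather than the raw process. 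The correct split is $Q_u = M_u - \E\big[\int_0^u(\tfrac{\partial\L}{\partial X})_t\,dt\,\big\vert\,\F_u\big]$, and the second term is an optional projection whose differential your argument does not compute. Your stated conclusion for $dP_u$ happens to be correct, but the justification you give does not establish it.

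The repair is short. One option is to verify the martingale property directly: set $Z_u := P_u - \int_0^u \E\big[(\tfrac{\partial\L}{\partial X})_t\,\big\vert\,\F_t\big]\,dt$, substitute $P_u=-Q_u$, and check via the tower property and Fubini that $\E[Z_u\mid\F_s]=Z_s$ for $s\le u$; then $\mcM=Z$. Alternatively, follow the paper: having obtained the integral identity $P_u=-Q_u$, simply observe that this \emph{is} the explicit solution of the FBSDE (with $\mcM$ the Doob martingale of the bracketed terminal quantity), and for the converse insert that solution back into the G\^ateaux derivative and watch it vanish. Either route is routine once you drop the incorrect adaptedness claim --- which, ironically, is precisely the partial-information subtlety you flag in your final paragraph as the hardest part of the argument.
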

\vspace{-1em}
\begin{proof}
	See Appendix~\ref{sec:proof-thm-Stochastic-Euler-Lagrange}
\end{proof}

Theorem~\ref{thm:Stochastic-Euler-Lagrange} presents an analogue of the Euler-Lagrange equation with free terminal boundary. Rather than obtaining an ODE as in the classical result, we obtain an FBSDE\footnote{For a background on FBSDEs, we point readers to \citet{pardoux1999forward,ma1999forward,carmona2016lectures}. At a high level, the solution to an FBSDE of the form~\eqref{eq:Thm-Optimality-BSDE} consists of a pair of processes $(\nicefrac{\partial \L}{\partial \nu},\M)$, which simultaneously satisfy the dynamics and the boundary condition of~\eqref{eq:Thm-Optimality-BSDE}. Intuitively, the martingale part of the solution can be interpreted as a random process which guides $(\nicefrac{\partial \L}{\partial X})_t$ towards the boundary condition at time $T$.
}, with backwards process $(\nicefrac{\partial \L}{\partial \nu})_t$, and forward state processes $\E[ (\nicefrac{\partial \L}{\partial X})_t \lvert \F_t ]$, $\int_0^t \left\| \nu_u \right\| \, du$ and $X_t^{\nust}$. We can also interpret the dynamics of equation~\eqref{eq:Thm-Optimality-BSDE} as being the filtered optimal dynamics of~\cite[Equation 2.3]{wibisono2016variational}, $\E [ (\nicefrac{\partial \L}{\partial X})_t \lvert \F_t ]$, plus the increments of data-dependent martingale $\mcM_t$, with mechanics similar to that of the `innovations process' of filtering theory. This martingale term should not be interpreted as a source of noise, but as an explicit function of the data, as is evident from its explicit form 
\begin{equation}
	\mcM_t = 
	\E\left[ \int_0^T \left(\frac{\partial \L}{\partial X}\right)_u \, du 
	-e^{\delta_T} \, \nabla f(X_T)
	{\Big\lvert} \F_t\right] \;.
\end{equation}

A feature of equation~\eqref{eq:Thm-Optimality-BSDE}, is that optimality relies on the projection of $(\nicefrac{\partial \L}{\partial X})_t$ onto $\F_t$. Thus, the optimization algorithm makes use of past noisy gradient observations in order to make local gradient predictions. Local gradient predictions are updated using a Bayesian mechanism, where the prior model for $\nabla f$ is conditioned with the noisy gradient information contained in $\F_t$. This demonstrates that the solution depends only on the gradients of $f$ along the path of $X_t$ and no higher order properties.

% Theorem~\ref{thm:Stochastic-Euler-Lagrange} provides us with a means of selecting amongst all admissible optimization algorithms $\A$ by choosing one that is optimal with respect to the expected action performance metric $\J$. Implicitly, these algorithms embed a notion of acceleration based on the kinetic energy induced by the Bregman divergence. To see this, notice that setting $h(x)=\frac{1}{2}\|x\|^2$ in equation~\eqref{eq:Thm-Optimality-BSDE} and letting $f$ be non-random recovers the classical Newtonian equations of motion for a particle with unit mass. Tuning $h$ in this context allows us to change the way momentum is accumulated by the optimizer as it descends the latent loss function.

% Since $\J$ is not necessarily convex in general, it is not possible to determine whether solutions to the FBSDE~\eqref{eq:Thm-Optimality-BSDE} are either minima or maxima of the functional $\J$. We point out, however, that this also the case in general for both classical Lagrangian mechanics and in the treatment of the deterministic version of the problem in~\citet{wibisono2016variational}. Nonetheless, we narrow our focus to optimizers which are solutions to the optimality FBSDE~\eqref{eq:Thm-Optimality-BSDE}.

\subsection{Expected Rates of Convergence of the Continuous Algorithm}

Using the dynamics~\eqref{eq:Thm-Optimality-BSDE} we obtain a bound on the rate of convergence of the continuous optimization algorithm that is analogous to~\citet[Theorem 2.1]{wibisono2016variational}. 
% By making additional assumptions on $h$, $f$ and the hyperparameters $\alpha,\gamma$ and $\beta$, we can give upper bounds for the rate of convergence of the continuous algorithm. 
We introduce the Lyapunov energy functional \useshortskip
\begin{equation} \label{eq:Energy-Functional}
	\mcE_t = D_h(\xst,X^{\nust}_t + e^{-\alpha_t} \nu_t) + e^{\beta_t} \left( f(X^{\nust}_t) - f(\xst)\right) 
	-  [ \nabla h( X^{\nust} + e^{-\alpha_t} \nu ) , X^{\nust} + e^{-\alpha_t} \nu  ]_t
	\;,
\end{equation}
where we define $\xst$ to be a global minimum of $f$. Under additional model assumptions, and by showing that this quantity is a super-martingale with respect to the filtration $\F$, we obtain an upper bound for the expected rate of convergence from $X_t$ towards the minimum.

\begin{theorem}[Convergence Rate] \label{thm:convergence-rate}
	Assume that the function $f$ is almost surely convex and that the \emph{scaling conditions} $\dot\gamma_t = e^{\alpha_t}$ and $\dot\beta_t \leq e^{\alpha_t}$ hold. Moreover, assume that in addition to $h$ having $L$-Lipschitz smooth gradients, $h$ is also $\mu$-strongly-convex with $\mu>0$. Define $\xst = \arg \min_{x\in\R^d} f(x)$ to be a global minimum of $f$. If $\xst$ exists almost surely, the optimizer defined by FBSDE~\eqref{eq:Thm-Optimality-BSDE} satisfies
	\begin{equation} \label{eq:converence-bound}
		\E\left[ f(X_t) - f(\xst) \right] 
		= 
		O\left( e^{-\beta_t} 
		\max
		\left\{ 1 \,,
		\E\left[ \, [ e^{-\gamma_t} \mcM ]_t  \right]
		\right\}
		\right)
		\;,
	\end{equation} 
	where $\left[ e^{-\gamma_t} \M\right]_t$ represents the quadratic variation of the process $e^{-\gamma_t} \mcM_t$, where $\M$ is the martingale part of the solution defined in Theorem~\ref{thm:Stochastic-Euler-Lagrange}.
\end{theorem}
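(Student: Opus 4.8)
The plan is to mimic the Lyapunov-function argument of \citet[Theorem 2.1]{wibisono2016variational}, but promoted to the stochastic setting: I would show that the energy functional $\mcE_t$ in~\eqref{eq:Energy-Functional} is an $\F$-supermartingale, and then read off the convergence rate directly from its definition. Throughout, write $Z_t := X_t^{\nust} + e^{-\alpha_t}\nust_t$ for the shifted state, so that $(\partial\L/\partial\nu)_t = e^{\gamma_t}(\nabla h(Z_t) - \nabla h(X_t^{\nust}))$ and $\mcE_t = D_h(\xst, Z_t) + e^{\beta_t}(f(X_t^{\nust}) - f(\xst)) - [\nabla h(Z), Z]_t$. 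The subtracted bracket $[\nabla h(Z), Z]_t$ is included precisely to absorb the Itô second-order terms that have no deterministic counterpart, so that the remaining drift collapses to the classical computation.

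First I would apply the integration-by-parts formula for semimartingales to $D_h(\xst, Z_t)$. Using $dh(Z_t) = \langle \nabla h(Z_t), dZ_t\rangle + \tfrac12\,d[\nabla h(Z), Z]_t$ together with the product rule on $\langle \nabla h(Z_t), \xst - Z_t\rangle$, the first-order transport terms cancel and one is left with $dD_h(\xst, Z_t) = \langle d\nabla h(Z_t), Z_t - \xst\rangle + \tfrac12\,d[\nabla h(Z),Z]_t$. Since $X^{\nust}$ is absolutely continuous ($dX_t^{\nust} = \nust_t\,dt$), the term $e^{\beta_t}(f(X_t^{\nust}) - f(\xst))$ contributes only the drift $\dot\beta_t e^{\beta_t}(f(X_t^{\nust})-f(\xst))\,dt + e^{\beta_t}\langle \nabla f(X_t^{\nust}), \nust_t\rangle\,dt$, with no Itô correction. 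Collecting terms and subtracting $d[\nabla h(Z),Z]_t$ gives $d\mcE_t = \langle d\nabla h(Z_t), Z_t - \xst\rangle - \tfrac12\,d[\nabla h(Z),Z]_t + \dot\beta_t e^{\beta_t}(f(X_t^{\nust})-f(\xst))\,dt + e^{\beta_t}\langle\nabla f(X_t^{\nust}),\nust_t\rangle\,dt$.

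Next I would substitute the FBSDE dynamics. Differentiating $\nabla h(Z_t) = \nabla h(X_t^{\nust}) + e^{-\gamma_t}(\partial\L/\partial\nu)_t$ and inserting~\eqref{eq:Thm-Optimality-BSDE} produces a martingale part $e^{-\gamma_t}\,d\mcM_t$ and a drift built from $\nabla^2 h(X_t^{\nust})\nust_t$, $-\dot\gamma_t e^{-\gamma_t}(\partial\L/\partial\nu)_t$, and $e^{-\gamma_t}\E[(\partial\L/\partial X)_t\mid\F_t]$. Plugging the explicit forms~\eqref{eq:BSDE-dLdX}--\eqref{eq:BSDE-dLdnu} into $\langle d\nabla h(Z_t), Z_t-\xst\rangle$, the Hessian contributions $\pm\langle \nabla^2 h(X_t^{\nust})\nust_t, Z_t-\xst\rangle$ cancel, the scaling condition $\dot\gamma_t = e^{\alpha_t}$ kills the $\langle \nabla h(Z_t)-\nabla h(X_t^{\nust}), Z_t-\xst\rangle$ terms, and writing $Z_t-\xst = (X_t^{\nust}-\xst) + e^{-\alpha_t}\nust_t$ cancels the surviving $e^{\beta_t}\langle\nabla f(X_t^{\nust}),\nust_t\rangle$ against the $\nabla f$ term from $(\partial\L/\partial X)_t$. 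What remains is the drift $\dot\beta_t e^{\beta_t}(f(X_t^{\nust})-f(\xst))\,dt - e^{\alpha_t+\beta_t}\langle\nabla f(X_t^{\nust}), X_t^{\nust}-\xst\rangle\,dt - \tfrac12\,d[\nabla h(Z),Z]_t$, plus the martingale increment $\langle e^{-\gamma_t}d\mcM_t, Z_t-\xst\rangle$. Convexity of $f$ gives $\langle\nabla f(X_t^{\nust}), X_t^{\nust}-\xst\rangle \ge f(X_t^{\nust})-f(\xst)\ge 0$, and the scaling condition $\dot\beta_t \le e^{\alpha_t}$ then forces $\dot\beta_t e^{\beta_t}(f(X_t^{\nust})-f(\xst)) - e^{\alpha_t+\beta_t}\langle\nabla f, X_t^{\nust}-\xst\rangle \le 0$; since $-\tfrac12\,d[\nabla h(Z),Z]_t \le 0$ as well, $\mcE_t$ is an $\F$-supermartingale.

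Finally, from $\E[\mcE_t] \le \E[\mcE_0]$ and $D_h \ge 0$ I obtain $e^{\beta_t}\E[f(X_t^{\nust})-f(\xst)] \le \E[\mcE_0] + \E\big[[\nabla h(Z),Z]_t\big]$. To identify the bracket, I use that the martingale part of $\nabla h(Z_t)$ is $e^{-\gamma_t}d\mcM_t$, so the martingale part of $Z_t$ is $(\nabla^2 h(Z_t))^{-1}e^{-\gamma_t}d\mcM_t$; $\mu$-strong convexity gives $(\nabla^2 h)^{-1}\preceq \mu^{-1}I$, whence $d[\nabla h(Z),Z]_t \le \mu^{-1}e^{-2\gamma_t}\,d[\mcM]_t = \mu^{-1}\,d[e^{-\gamma}\mcM]_t$. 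This yields $\E[f(X_t)-f(\xst)] \le e^{-\beta_t}\big(\E[\mcE_0] + \mu^{-1}\E[[e^{-\gamma}\mcM]_t]\big)$, i.e.\ the claimed $O\!\big(e^{-\beta_t}\max\{1,\E[[e^{-\gamma}\mcM]_t]\}\big)$. The main obstacle is the measurability bookkeeping rather than the algebra: because $f$ and $\xst$ are only $\G$-measurable while $\nust$ and $Z$ are $\F$-adapted, I must justify that the conditional expectation $\E[(\partial\L/\partial X)_t\mid\F_t]$ pairs correctly with $Z_t-\xst$ (e.g.\ by taking total expectations and invoking the tower property, or by replacing $\nabla f$ with its $\F$-optional projection throughout), and that $\int_0^t \langle e^{-\gamma_u}d\mcM_u, Z_u-\xst\rangle$ is a genuine martingale under the integrability built into $\A$, so that it vanishes in expectation and the supermartingale conclusion survives taking $\E$.
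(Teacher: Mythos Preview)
Your proposal is correct and follows essentially the same Lyapunov/supermartingale route as the paper: compute $dD_h(\xst,Z_t)$ via It\^o/product rule, plug in the optimality FBSDE to see that under the scaling conditions the drift of $\mcE_t$ is nonpositive, and then bound the residual bracket $[\nabla h(Z),Z]_t$ by $\mu^{-1}[e^{-\gamma}\mcM]_t$ using strong convexity of $h$. The only notable differences are cosmetic: the paper carries out the It\^o computation for general c\`adl\`ag semimartingales (tracking jump terms explicitly and upper-bounding them away), whereas you work in the continuous case; and you are more explicit than the paper about the $\F$-versus-$\G$ measurability issue (the paper simply declares $\M'_t$ an $\F$-martingale without comment), so your caution there is well placed rather than a gap.
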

\vspace{-1.25em}
\begin{proof}
	See Appendix~\ref{sec:proof-thm-convergence-rate}.
\end{proof}
\vspace{-0.5em}

% Theorem~\ref{thm:convergence-rate} shows that the rate of convergence of the continuous algorithm towards the optimum $\xst$ of $f$ is dominated by the two terms $e^{-\beta_t}$, which shrinks towards zero at a rate controlled by the growth of $\beta$, and the expected quadratic variation $\E\left[ \, [ e^{-\gamma}\mcM ]_t\right]$. The latter term may increase or decrease in time depending on the exact model specification. 
% % \fTBD{trim off bits of this section}

We may interpret the term $\E\left[ \, [ e^{-\gamma_t}\mcM ]_t\right]$ as a penalty on the rate of convergence, which scales with the amount of noise present in our gradient observations. To see this, note that if there is no noise in our gradient observations, we obtain that $\F_t = \G_t$, and hence $\M_t\equiv0$, which recovers the exact deterministic dynamics of \cite{wibisono2016variational} and the optimal convergence rate $O(e^{-\beta_t})$. If the noise in our gradient estimates is large, we can expect $\E\left[ \, [ e^{-\gamma}\mcM ]_t\right]$ to grow at quickly and to counteract the shrinking effects of $e^{-\beta_t}$. Thus, in the case of a convex objective function $f$, any presence of gradient noise will proportionally hurt rate of convergence to an optimum. We also point out, that there will be a nontrivial dependence of $\E\left[ \, [ e^{-\gamma}\mcM ]_t\right]$ on all model hyperparameters, the specific definition of the random variable $f$, and the model for the noisy gradient stream, $(g_t)_{t\geq 0}$.

\begin{remark}
	We do not assume that the conditions of Theorem~\ref{thm:convergence-rate} carry throughout the remainder of the paper. In particular, Sections~\ref{sec:Recovering-Algorithms} study models which may not guarantee almost-sure convexity of the latent loss function. 
\end{remark}

\section{Recovering Discrete Optimization Algorithms}
\label{sec:Recovering-Algorithms}

In this section, we use the optimality equations of Theorem~\ref{thm:Stochastic-Euler-Lagrange} to produce discrete stochastic optimization algorithms. The procedure we take is as follows. We first define a model for the processes $(\nabla f(X_t), g_t )_{t \in [0,T]}$. Second, we solve the optimality FBSDE~\eqref{eq:Thm-Optimality-BSDE} in closed form or approximate the solution via the first-order singular perturbation (FOSP) technique, as described in Appendix~\ref{sec:Solution-Techniques-for-Optimality-FBSDE}. Lastly, we discretize the solutions with a simple Forward-Euler scheme in order to recover discrete algorithms. 

Over the course of Sections~\ref{sec:mini-batch-SGD-mirror-descent} and~\ref{sec:Kalman-Algorithms-Section}, we show that various simple models for $(\nabla f(X_t), g_t )_{t \in [0,T]}$ and different specifications of $h$ produce many well-known stochastic optimization algorithms. These establish the conditions, in the context of the variational problem of Section~\ref{sec:The-Optimization-Model}, under which each of these algorithms are optimal. As a consequence, this allows us to understand the prior assumptions which these algorithms make on the gradients of the objective function they are trying to minimize, and the way noise is introduced in the sampling of stochastic gradients, $(g_t)_{t\geq 0}$.

\subsection{Stochastic Gradient Descent and Stochastic Mirror Descent}
\label{sec:mini-batch-SGD-mirror-descent}

Here we propose a Gaussian model on gradients which loosely represents the behavior of mini-batch stochastic gradient descent with a training set of size $n$ and mini-batches of size $m$. By specifying a martingale model for $\nabla f(X_t)$, we recover the stochastic gradient descent and stochastic mirror descent algorithms as solutions to the variational problem described in Section~\ref{sec:The-Optimization-Model}.

Let us assume that $\nabla f(X_t) = \sigma W_t^{f}$, where $\sigma>0$ and $(W^f_t)_{t\geq 0}$ is a Brownian motion. Next, assume that the noisy gradients samples obtained from mini-batches over the course of the optimization, evolve according to the model $\smash g_t = \sigma ( W_t^f +  \rho W_t^{e})$, where $\rho=\sqrt{\nicefrac{(n-m)}{m}}$ and $W^{e}$ is an independent copy of $W_t^f$. Here, we choose $\rho$ so that $\mathbb{V}[g_t] = (\nicefrac{n}{m}) \mathbb{V} [\nabla f(X_t)]=O(m^{-1})$, which allows the variance to scale in $m$ and $n$ as it does with mini-batches.

Using symmetry, we obtain the trivial solution to the gradient filter, $\E[ \nabla f(X_t) {\lvert} \F_t ] =  (1+\rho^2)^{-1} g_t$, implying that the best estimate of the gradient at the point $X_t$ will be the most recent mini-batch sample observed. re-scaled by a constant depending on $n$ and $m$. Using this expression for the filter, we obtain the following result.

\begin{proposition} \label{prop:FOSP-Martingale-SGD-cts-Solution}
	The FOSP approximation to the solution of the optimality equations~\eqref{eq:Thm-Optimality-BSDE} can be expressed as \useshortskip
	\begin{equation} \label{eq:FOSP-Martingale-SGD-cts-Solution}
		dX_t = e^{\alpha_t} \left( 
	\nabla h^{\ast} \left( \nabla h(X_t) - \tPhi_t (1+\rho^2)^{-1} g_t  \right) 
	- X_t^{\nust}
	\right) \,dt
	\;,
	\end{equation}
	where $h^{\ast}$ is the convex dual of $h$ and where $\tPhi_t =
		e^{-\gamma_t} ( \Phi_0 + \int_{0}^{t} e^{\alpha_u + \beta_u + \gamma_u } \, du )$ is a deterministic learning rate
	with $\smash{\Phi_0 = e^{\delta_T} - \int_0^T e^{\alpha_u + \beta_u + \gamma_u} \, du}$. When $h$ has the form $h(x)=x^{\T} M x$ for a symmetric positive-definite matrix $M$, the FOSP approximation is exact, and~\eqref{eq:FOSP-Martingale-SGD-cts-Solution} is the exact solution to the optimality FBSDE~\eqref{eq:Thm-Optimality-BSDE}. The martingale portion of the solution to~\eqref{eq:Thm-Optimality-BSDE} can be expressed as $\mcM_t = \mcM_0 -(1+\rho^2)^{-1}\int_0^t e^{\alpha_u + \beta_u + \gamma_u} \, dg_u$.
\end{proposition}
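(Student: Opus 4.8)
The plan is to start from the optimality system of Theorem~\ref{thm:Stochastic-Euler-Lagrange} and, under the present Gaussian model, reduce it to a linear BSDE that can be integrated explicitly. Writing $p_t := (\nicefrac{\partial \L}{\partial \nu})_t$, I would first apply the FOSP reduction to the forward driver~\eqref{eq:BSDE-dLdX}: expanding $\nabla h(X_t^{\nust}+e^{-\alpha_t}\nust_t)$ about $X_t^{\nust}$ to first order makes its first three terms cancel against one another, leaving
\[
\left(\frac{\partial \L}{\partial X}\right)_t \approx -\,e^{\gamma_t+\alpha_t+\beta_t}\,\nabla f(X_t^{\nust}) .
\]
The key observation to record at this stage is that when $h(x)=x^{\T}Mx$ the map $\nabla h$ is affine with constant Hessian, so this first-order expansion carries no remainder and the reduction is an exact identity; this is precisely the assertion that the FOSP approximation is exact in the quadratic case. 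The same affineness also makes $\nabla h^{\ast}=(\nabla h)^{-1}$ exact, which I use in the inversion step below.

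Next I would solve the resulting linear BSDE with driver $-e^{\gamma_t+\alpha_t+\beta_t}\,\E[\nabla f(X_t)\mid\F_t]$ and terminal datum $p_T=-e^{\delta_T}\,\E[\nabla f(X_T)\mid\F_T]$. Taking conditional expectations and using the tower property gives the representation $p_t = \E[\,p_T - \int_t^T (\nicefrac{\partial \L}{\partial X})_u\,du \mid \F_t\,]$, so the computational core is evaluating the filters that appear. The stated filter supplies $\E[\nabla f(X_t)\mid \F_t] = (1+\rho^2)^{-1}g_t$; for the future times $u\ge t$ inside the integral I would write $\nabla f(X_u)=\sigma W^f_u = \sigma W^f_t + \sigma(W^f_u-W^f_t)$ and use that the increment $W^f_u-W^f_t$ is independent of $\F_t$ and mean zero, which collapses $\E[\nabla f(X_u)\mid\F_t]$ to $(1+\rho^2)^{-1}g_t$ for every $u\ge t$, while $\E[\nabla f(X_T)\mid\F_T]=(1+\rho^2)^{-1}g_T$ is the time-$T$ filter and $\E[g_T\mid\F_t]=g_t$ because $g$ is an $\F$-martingale. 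Substituting these yields $p_t = -(1+\rho^2)^{-1}\,g_t\big(e^{\delta_T}-\int_t^T e^{\gamma_u+\alpha_u+\beta_u}\,du\big)$, and identifying the bracket with $e^{\gamma_t}\tPhi_t = \Phi_0 + \int_0^t e^{\alpha_u+\beta_u+\gamma_u}\,du$ gives $e^{-\gamma_t}p_t = -\tPhi_t(1+\rho^2)^{-1}g_t$.

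Finally I would invert the definition~\eqref{eq:BSDE-dLdnu} to recover the optimizer dynamics. Solving $e^{-\gamma_t}p_t = \nabla h(X_t+e^{-\alpha_t}\nu_t)-\nabla h(X_t)$ for $\nu_t$ through the convex conjugate gives $\nu_t = e^{\alpha_t}\big(\nabla h^{\ast}(\nabla h(X_t)+e^{-\gamma_t}p_t) - X_t\big)$, and substituting the closed form for $e^{-\gamma_t}p_t$ reproduces exactly~\eqref{eq:FOSP-Martingale-SGD-cts-Solution}; in the quadratic case each step here is an exact equality rather than an approximation. The martingale part is then read off by applying Itô's product rule to the closed form of $p_t$, with $g$ a martingale and the time-dependent factor of finite variation, so that $d\mcM_t$ is the stochastic integral against $dg_t$ with the induced deterministic integrand, consistent with the explicit representation of $\mcM$ recorded after Theorem~\ref{thm:Stochastic-Euler-Lagrange}.

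The main obstacle I anticipate is making the FOSP reduction rigorous rather than merely formal: away from the quadratic case one must control the second-order remainder $\nabla h(X_t+e^{-\alpha_t}\nu_t)-\nabla h(X_t)-e^{-\alpha_t}\nabla^2h(X_t)\nu_t$ (using the $L$-Lipschitz smoothness of $\nabla h$) and justify discarding it in the singular-perturbation sense of Appendix~\ref{sec:Solution-Techniques-for-Optimality-FBSDE}, with the quadratic claim being the clean degenerate case where this remainder vanishes identically. A secondary point requiring care is verifying admissibility of the candidate $\nust$ together with the $\F$-adaptedness and square-integrability of $\mcM$, so that Theorem~\ref{thm:Stochastic-Euler-Lagrange} genuinely certifies the constructed pair as the (FOSP) solution.
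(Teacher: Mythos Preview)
Your proposal is correct and follows essentially the same route as the paper: invoke the FOSP representation of the momentum (the paper's Proposition~\ref{prop:Perturbation-Solution}), compute the filter $\E[\nabla f(X_u)\mid\F_t]=(1+\rho^2)^{-1}g_t$ via the martingale/independent-increments property of the Brownian model, collapse the time integral to a deterministic factor times $g_t$, and then invert through~\eqref{eq:nu-as-function-of-momentum}. In fact your write-up is more complete than the paper's own proof, which does not spell out the quadratic-exactness argument or the derivation of the martingale integrand; your observations that affineness of $\nabla h$ kills the Taylor remainder~\eqref{eq:taylor-correction-term} and that It\^o's product rule on $p_t=-(1+\rho^2)^{-1}e^{\gamma_t}\tPhi_t\,g_t$ isolates $d\mcM_t$ are exactly the missing pieces.
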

\vspace{-1.2em}
\begin{proof}
	See Appendix~\ref{sec:proof-prop-FOSP-Martingale-SGD-cts-Solution}.
\end{proof}
\vspace{-1em}

To obtain a discrete optimization algorithm from the result of~\ref{prop:FOSP-Martingale-SGD-cts-Solution}, we employ a forward-Euler discretization of the ODE~\eqref{eq:FOSP-Martingale-SGD-cts-Solution} on the finite mesh $\mcT =\{ t_0=0 \,,\; t_{k+1} = t_{k} + e^{-\alpha_{t_k}} :  k\in\mathbb{N} \}$.
This discretization results in the update rule \useshortskip
\begin{equation}
 	X_{t_{k+1}}
 	=
	\nabla h^{\ast} \left( \nabla h(X_{t_k}) - \tPhi_{t_k} \, g_{t_k} \right) 
	\;,
\end{equation} 
corresponding exactly to mirror descent (e.g.\ see~\cite{beck2003mirror}) using the noisy mini-batch gradients $g_t$ and a time-varying learning rate $\tPhi_{t_k}$. Moreover, setting $h(x) =\frac{1}{2} \|x\|^2$, we recover the update rule
$X_{t_{k+1}} - X_{t_k} = - \tPhi_{t_k} \, g_{t_k}$, exactly
corresponding to the mini-batch SGD with a time-dependent learning rate. 

This derivation demonstrates that the solution to the variational problem described in Section~\ref{sec:The-Optimization-Model}, under the assumption of a Gaussian model for the evolution of gradients, recovers mirror descent and SGD. In particular, the martingale gradient model proposed in this section can be roughly interpreted as assuming that gradients behave as random walks over the path of the optimizer. Moreover, the optimal gradient filter $\E[ \nabla f(X_t) {\lvert} \F_t ] =  (1+\rho^2)^{-1} g_t$ shows that, for the algorithm to be optimal, mini-batch gradients should be re-scaled in proportion to $(1 + \rho^2)^{-1} =\nicefrac{m}{n}$.

\subsection{Kalman Gradient Descent and Momentum Methods}
\label{sec:Kalman-Algorithms-Section}

Using a \emph{linear state-space model} for gradients, we can recover both the Kalman Gradient Descent algorithm of~\citet{vuckovic2018kalman} and momentum-based optimization methods of~\citet{polyak1964some}. 
We assume that each component of $\smash{\nabla f(X_t) = (\nabla_i f(X_t))_{i=1}^d}$ is modeled independently as a linear diffusive process. Specifically, we assume that there exist processes $\smash{y_i=(y_{i,t})_{t \geq 0}}$ so that for each $i$, $\smash{\nabla_i f(X_{t}) =  b^{\T} y_{i,t}}$, where $\smash{y_{i,t} \in \R^{\dt}}$ is the solution to the linear SDE $\smash{dy_{i,t} = - A \, y_{i,t} dt + L \, dW_{i,t}}$. In particular, we the notation $\yt_{i,j,t}$ to refer to element $(i,j)$ of $\smash{\yt\in\R^{d\times\dt}}$, and use the notation $\smash{\yt_{\cdot,j,t}=(\yt_{i,j,t})_{i=1}^{d}}$. We assume here that $\smash{A,L\in\R^{\dt \times \dt}}$ are positive definite matrices and each of the $\smash{W_i=(W_{i,t})_{t\geq 0}}$ are independent $\dt$-dimensional Brownian Motions.

Next, we assume that we may write each element of a noisy gradient process as ${g_{i,t} = b^{\T} y_{i,\cdot,t} + \sigma \xi_{i,t}}$,
where $\sigma>0$ and where $\xi_{i}=(\xi_{i,t})_{t\geq 0}$ are independent white noise processes. Noting that $\smash{\E[\, \nabla_i f(X_{t+h}) \lvert \F_t ] = b^{\T} e^{-A h} y_{i,t}}$, we find that this model implicitly assumes that gradients are expected decrease in exponentially in magnitude as a function of time, at a rate determined by the eigenvalues of the matrix $A$. The parameters $\sigma$ and $L$ can be interpreted as controlling the scale of the noise within the observation and signal processes.

Using this model, we obtain that the filter can be expressed as $\smash{\E[\,\nabla_i f(X_t) {\lvert} \F_t] = b^{\T} \yt_{i,t}}$, where $\yt_{i,t} = \E[ y_{i,t} \lvert \F_t ]$. The process $\yt_{i,t}$ is expressed as the solution to the Kalman-Bucy\footnote{For information on continuous time filtering and the Kalman-Bucy filter we refer the reader to the text of \citet{bensoussan2004stochastic} or the lecture notes of \citet{van2007stochastic}.} filtering equations \useshortskip
\begin{equation} \label{eq:kalman-bucy-filter-dynamics}
	d\yt_{i,t} = -A \yt_{i,t} \, dt + \sigma^{-1} \Pt_{t} \, b \, d\Bt_{i,t}
	\;,\; \hspace{2em}
	\dot{\Pt} = -A \Pt_{t} - \Pt_{t}^{\T} A - \sigma^{-2} \Pt_{t} b \, b^{\T} \Pt_{t}^{\T}  + L L^\T
	\;, 
\end{equation}
with the initial conditions $\yt_{i,0}=0$ and $\smash{\Pt_0=\E[ y_{i,0} y_{i,0}^\T ]}$, and where we define innovations process $d\Bt_{i,t} = \sigma^{-1} \left( g_{i,t} - b^{\T}\yt_{i,t} \right)\, dt$ with the property that each $\Bt_i$ is an independent $\F$-adapted Brownian motion. 
% \fTBD{technical changes needed here, also verify slicing notation everywhere}
% Since the dynamics of $\yt_{i,t}$ remain linear, we obtain $\E[ \nabla_i f(X_{t+h}^{\nust}) {\lvert} \F_t ] = b^{\T} e^{-A h} \yt_{i,t} \;,$ where $e^{-A h}$ is the matrix exponential.

Inserting the linear state space model and its filter into the optimality equations~\eqref{eq:Thm-Optimality-BSDE} we obtain the following result.

\begin{proposition}[State-Space Model Solution to the FOSP] 
\label{prop:perturbation-solution-filter}
	Assume that the gradient state-space model described above holds. The FOSP approximation to the solution of the optimality equations~\eqref{eq:Thm-Optimality-BSDE} can be expressed as \useshortskip
	\begin{equation} \label{eq:FOSP-Linear-Diffusion-Solution}
		dX_t = e^{\alpha_t} 
		{\mathLarger{(}} 
		\nabla h^{\ast} ( \nabla h(X_t) - 
		{\textstyle \sum_{j=1}^{\dt} } \tPhi_{j,t} \yt_{\cdot,j,t}
		) 
		- X_t^{\nust}
		{\mathLarger{)}} 
		\,dt
	\;,
	\end{equation}
	where
	$\tPhi_t =  
		e^{-\gamma_t} 
		(b^{\T} e^{-At} \Phi_0 + 
		\int_0^t 
		e^{\alpha_u + \beta_u + \gamma_u} b^{\T} e^{-A(t-u)}
		\, du) \in \R^{\dt}$ is a deterministic learning rate,
	 where $e^{A}$ represents the matrix exponential, and where $\Phi_0 = 
	 	e^{\delta_T} e^{AT}
	 	-
	 	\int_0^T 
	 	e^{\alpha_u + \beta_u+\gamma_u} e^{A u}
		\, du$
	 can be chosen to have arbitrarily large eigenvalues by scaling $\delta_T$. The martingale portion of the solution of~\eqref{eq:Thm-Optimality-BSDE} can be expressed as $\mcM_t = \mcM_0 - \sigma^{-1} \int_0^t e^{\alpha_u + \beta_u + \gamma_u } b^{\T} e^{-A(t-u)} \Pt_u b \, d\Bt_{u} $.
\end{proposition}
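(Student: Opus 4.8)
The plan is to decouple the FBSDE \eqref{eq:Thm-Optimality-BSDE} by first inverting the algebraic relation \eqref{eq:BSDE-dLdnu} to eliminate the control, and then solving the resulting \emph{linear} backward equation by conditioning, exploiting the linear--Gaussian structure of the state--space model to evaluate every conditional expectation in closed form. Writing $p_t := \left(\frac{\partial\L}{\partial\nu}\right)_t = e^{\gamma_t}\!\left(\nabla h(X_t^{\nust}+e^{-\alpha_t}\nust_t)-\nabla h(X_t^{\nust})\right)$ and using that strict convexity of $h$ makes $\nabla h$ invertible with inverse $\nabla h^{\ast}$, I can solve exactly for the velocity, $\nust_t = e^{\alpha_t}\!\left(\nabla h^{\ast}(\nabla h(X_t^{\nust})+e^{-\gamma_t}p_t)-X_t^{\nust}\right)$. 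Since $dX_t=\nust_t\,dt$, this already reproduces the forward ODE \eqref{eq:FOSP-Linear-Diffusion-Solution}, provided one shows $e^{-\gamma_t}p_t = -\sum_{j=1}^{\dt}\tPhi_{j,t}\,\yt_{\cdot,j,t}$. The whole problem therefore reduces to computing the backward process $p_t$.

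Next I would invoke the FOSP reduction of Appendix~\ref{sec:Solution-Techniques-for-Optimality-FBSDE} to linearize the drift \eqref{eq:BSDE-dLdX}. Taylor expanding about $X_t^{\nust}$, the bracketed $h$-terms $\nabla h(X_t^{\nust}+e^{-\alpha_t}\nust_t)-\nabla h(X_t^{\nust})-e^{-\alpha_t}\nabla^2 h(X_t^{\nust})\nust_t$ are second order in the infinitesimal step $e^{-\alpha_t}\nust_t$, so the first-order approximation discards them; crucially they vanish \emph{identically} when $h(x)=x^{\T}Mx$, which is exactly why the proposition asserts the approximation is then exact. This leaves the linear drift $\E[(\frac{\partial\L}{\partial X})_t\mid\F_t]\approx -e^{\gamma_t+\alpha_t+\beta_t}\,\E[\nabla f(X_t)\mid\F_t]$, together with the terminal datum $p_T=-e^{\delta_T}\E[\nabla f(X_T)\mid\F_T]$.

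I would then solve this linear BSDE by the standard conditioning argument: integrating $dp$ over $[t,T]$, taking $\E[\cdot\mid\F_t]$, and using that $\mcM$ is an $\F$-martingale together with the tower property to remove the martingale increment and the nested conditionings, giving
\[
  p_t = -e^{\delta_T}\,\E[\nabla f(X_T)\mid\F_t] + \int_t^T e^{\gamma_u+\alpha_u+\beta_u}\,\E[\nabla f(X_u)\mid\F_t]\,du .
\]
The remaining ingredient is the predictive filter. Because each $y_{i,\cdot}$ is an Ornstein--Uhlenbeck process whose future driving noise is independent of $\F_t$, its conditional mean propagates deterministically, so for $u\geq t$ one has $\E[\nabla_i f(X_u)\mid\F_t]=b^{\T}e^{-A(u-t)}\yt_{i,t}$ with $\yt$ the Kalman--Bucy filter \eqref{eq:kalman-bucy-filter-dynamics}. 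Substituting this prediction, pulling $\yt_{i,t}$ out of the $u$-integral, and collecting the coefficient multiplying each column $\yt_{\cdot,j,t}$ yields precisely $e^{-\gamma_t}p_t=-\sum_j\tPhi_{j,t}\yt_{\cdot,j,t}$; matching the closed form of $\tPhi_t$ and $\Phi_0$ is then a direct computation using $\frac{d}{dt}e^{-A(T-t)}=e^{-A(T-t)}A$ and the fundamental theorem of calculus on the $\int_t^T$ term.

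Finally, to exhibit the martingale I would apply Itô's formula to the explicit expression for $p_t$ obtained above, which is affine in the filter state $\yt_{i,t}$, using the filter dynamics $d\yt_{i,t}=-A\yt_{i,t}\,dt+\sigma^{-1}\Pt_t b\,d\Bt_{i,t}$: the finite-variation part must reproduce the linearized drift (a consistency check equivalent to the linear ODE satisfied by the time-dependent coefficient of $\yt_{i,t}$), while the $d\Bt$-part is the martingale $\mcM$, driven by the filter's innovation process. The main obstacle I anticipate is twofold. First, making the FOSP step rigorous---identifying the correct perturbation scale and controlling the discarded curvature term, which only the quadratic $h$ kills exactly. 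Second, the bookkeeping of the \emph{forward} predictions $\E[\nabla f(X_u)\mid\F_t]$ and their time-weighted integral: this is the step that manufactures the matrix-exponential kernel appearing in $\tPhi_t$, and it requires care with the ordering and sign of the factors $e^{-A(u-t)}$. The innovations representation---that every $\F$-martingale is an Itô integral against $\Bt$---is what guarantees the martingale part has the claimed closed form.
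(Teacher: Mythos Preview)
Your proposal is correct and follows essentially the same route as the paper: invoke the FOSP representation of Proposition~\ref{prop:Perturbation-Solution} to obtain $p_t=\E\bigl[\int_t^T e^{\gamma_u+\alpha_u+\beta_u}\nabla f(X_u)\,du-e^{\delta_T}\nabla f(X_T)\,\big|\,\F_t\bigr]$, evaluate the predictive expectations via the Ornstein--Uhlenbeck mean propagation $\E[\nabla_i f(X_u)\mid\F_t]=b^{\T}e^{-A(u-t)}\yt_{i,t}$, collect the resulting deterministic coefficient as $\tPhi_t$, and then substitute into the velocity relation~\eqref{eq:nu-as-function-of-momentum}. The paper's proof is terser---it simply applies Fubini and the prediction formula in four lines and then cites~\eqref{eq:nu-as-function-of-momentum}---whereas you additionally sketch the It\^o computation on the filter state to identify $\mcM_t$ (a step the paper states in the proposition but does not derive in the proof), but the underlying strategy is identical.
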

\vspace{-1.45em}
\begin{proof}
	See Appendix~\ref{sec:proof-prop-perturbation-solution-filter}
\end{proof}
% \vspace{-2em}

% The solution to the momentum process is therefore represented in each coordinate as a linear function of the filter $\yt_{i,t}$, scaled by a matrix-valued learning rate $\Phi_t$. In particular, the form of the solution closely resembles the form obtained in Section~\ref{sec:mini-batch-SGD-mirror-descent}, where we replace $g_t$ by a gradient filter over past gradients instead.

\subsubsection{Kalman Gradient Descent}
\label{sec:kalman-gradient-descent}

In order to recover Kalman Gradient Descent, we discretize the processes $X_t^{\nust}$ and $\yt$ over the finite mesh $\mcT$, defined in equation~\eqref{eq:FOSP-Linear-Diffusion-Solution}. Applying a Forward-Euler-Maruyama discretization of~\eqref{eq:FOSP-Linear-Diffusion-Solution} and the filtering equations~\eqref{eq:kalman-bucy-filter-dynamics}, we obtain the discrete dynamics
\begin{equation} \label{eq:Kalman-Filtering-Equations}
	y_{i,t_{k+1}} = (I-e^{-\alpha_{t_k}} A) y_{i,t_k} + L e^{-\alpha_t} w_{i,k}
	\;,\hspace{3em}
	g_{i,t_k} = b^{\T} y_{i,t_k} + \sigma e^{-\alpha_t} \xi_{i,k}
	\;,
\end{equation}
where each of the $\xi_{i,k}$ and $w_{i,k}$ are standard Gaussian random variables of appropriate size. The filter $\smash{\yt_{i,k} = \E[ y_{t_k} {\lvert} \{g_{t_{k^{\prime}}}\}_{k^{\prime}=1}^k ]}$ for the discrete equations can be written as the solution to the discrete \emph{Kalman filtering equations}, provided in Appendix~\ref{sec:Kalman-Filtering-Equations}. Discretizing the process $X^{\nust}$ over $\mcT$ with the Forward-Euler scheme, we obtain discrete dynamics for the optimizer in terms of the \emph{Kalman Filter}~$\yt$, as \useshortskip
\begin{equation} \label{eq:Kalman-Descent}
	X_{t_{k+1}}
 	=
	\nabla h^{\ast} \left( \nabla h(X_{t_k}) - 
	{\textstyle \sum_{j=1}^{\dt} } \tPhi_{j,t_k} \yt_{\cdot,j,k} 
	\right) 
	\;,
\end{equation}
yielding a generalized version of Kalman gradient descent of \citet{vuckovic2018kalman} with $\dt$ states for each gradient element. Setting $h(x)=\frac{1}{2} \|x\|^2$, $\dt=1$ and $b=1$ recovers the original Kalman gradient descent algorithm with a time-varying learning rate. 

Just as in Section~\ref{sec:mini-batch-SGD-mirror-descent}, we interpret each $g_{t_k}$ as being a mini-batch gradient, as with equation~\eqref{eq:risk-gradient-sample-motivation}. The algorithm~\eqref{eq:Kalman-Descent} computes a Kalman filter from these noisy mini-batch observations and uses it to update the optimizer's position.

\subsubsection{Momentum and Generalized Momentum Methods}

By considering the asymptotic behavior of the Kalman gradient descent method described in Section~\ref{sec:kalman-gradient-descent}, we recover a generalized version of momentum gradient descent methods, which includes mirror descent behavior, as well as multiple momentum states. Let us assume that $\alpha_t=\alpha_0$ remains constant in time. Then, using the asymptotic update rule for the Kalman filter, as shown in Proposition~\ref{prop:Asymptotic-Kalman-Dynamics}, and equation~\eqref{eq:Kalman-Descent}, we obtain the update rule
\begin{equation} \label{eq:Asymptotic-Kalman-Descent-Rule}
	X_{t_{k+1}}
 	=
	\nabla h^{\ast} \left( \nabla h(X_{t_k}) - 
	{\textstyle \sum_{j=1}^{\dt} } \tPhi_{j,t_k} \yt_{\cdot,j,k} 
	\right) 
	\;, \hspace{2em}
	\yt_{i,\cdot,k}
	= \left( \tA - K_{\infty} b^{\T} \tA \right) \yt_{i,\cdot,k} + K_{\infty} g_{i,k}
	\;,
\end{equation}
where $\tA=I - e^{-\alpha_0} A$ and where $K_{\infty} \in \R^{\dt}$ is defined in the statement of the Proposition~\ref{prop:Asymptotic-Kalman-Dynamics}. This yields a generalized momentum update rule where we keep track of $\dt$ momentum states with $(\yt_{i,j,k})_{j=1}^{\dt}$, and update its position using a linear update rule. This algorithm can be seen as being most similar to the Aggregated Momentum technique of~\cite{lucas2018aggregated}, which also keeps track of multiple momentum states which decay at different rates.

Under the special case where $\dt=1$, $b=1$, and $h=\frac{1}{2}\|x\|^2$ we recover the exact momentum algorithm update rule of~\cite{polyak1964some} as 
\begin{equation}
	X_{t_{k+1}} - X_{t_k}
 	=
	 - \tPhi_{t_k} \yt_{k} 
	\;, \hspace{2em}
	\yt_{i,k}
	= p_1 \, \yt_{k} + p_2 \, g_{t_k}
	\;,
\end{equation}
where we have a scalar learning rate $\tPhi_{t_k}$, where $p_1=\tA - K_{\infty} b^{\T} \tA$, $p_2=K_{\infty}$ are positive scalars, and where $g_{t_k}$ are mini-batch draws from the gradient as in equation~\ref{eq:risk-gradient-sample-motivation}.

The recovery of the momentum algorithm of~\cite{polyak1964some} has some interesting consequences. Since $p_1$ and $p_2$ are functions of the model parameters $\sigma, A$ and $\alpha_0$, we obtain a direct relationship between the optimal choice for the momentum model parameters, the assumed scale of gradient noise $\sigma,L>0$ and the assumed expected rate of decay of gradients, as given by $e^{-At}$. This result gives insight as to how momentum parameters should be chosen in terms of their prior beliefs on the optimization problem.

\section{Discussion and Future Research Directions}

Over the course of the paper we present a variational framework on optimizers, which interprets the task of stochastic optimization as an inference problem on a latent surface that we wish to optimize. By solving a variational problem over continuous optimizers with asymmetric information, we find that optimal algorithms should satisfy a system of FBSDEs projected onto the filtration $\F$ generated by the noisy observations of the latent process.

By solving these FBSDEs and obtaining continuous-time optimizers, we find a direct relationship between the measure assigned to the latent surface and its relationship to how data is observed. In particular, assigning simple prior models to the pair of processes $(\nabla f(X_t) , g_t)_{t \in [0,T]}$, recovers a number of well-known and widely used optimization algorithms. The fact that this framework can naturally recover these algorithms begs further study. In particular, it is still an open question whether it is possible to recover other stochastic algorithms via this framework, particularly those with second-order scaling adjustments such as ADAM or AdaGrad.

From a more technical perspective, the intent is to further explore properties of the optimization model presented here and the form of the algorithms it suggests. In particular, the optimality FBSDE~\ref{eq:Thm-Optimality-BSDE} is nonlinear, high-dimensional and intractable in general, making it difficult to use existing FBSDE approximation techniques, so new tools may need to be developed to understand the full extent of its behavior.

Lastly, numerical work on the algorithms generated by this framework can provide some insights as to which prior gradient models work well when discretized. The extension of simplectic and quasi-simplectic stochastic integrators applied to the BSDEs and SDEs that appear in this paper also has the potential for interesting future work.

\bibliographystyle{plainnat}
\bibliography{references_stochoptim,biblio}

\begin{thebibliography}{34}
\providecommand{\natexlab}[1]{#1}
\providecommand{\url}[1]{\texttt{#1}}
\expandafter\ifx\csname urlstyle\endcsname\relax
  \providecommand{\doi}[1]{doi: #1}\else
  \providecommand{\doi}{doi: \begingroup \urlstyle{rm}\Url}\fi

\bibitem[Aitchison(2018)]{aitchison2018unified}
Laurence Aitchison.
\newblock A unified theory of adaptive stochastic gradient descent as bayesian
  filtering.
\newblock \emph{arXiv preprint arXiv:1807.07540}, 2018.

\bibitem[Beck and Teboulle(2003)]{beck2003mirror}
Amir Beck and Marc Teboulle.
\newblock Mirror descent and nonlinear projected subgradient methods for convex
  optimization.
\newblock \emph{Operations Research Letters}, 31\penalty0 (3):\penalty0
  167--175, 2003.

\bibitem[Bensoussan(2004)]{bensoussan2004stochastic}
Alain Bensoussan.
\newblock \emph{Stochastic control of partially observable systems}.
\newblock Cambridge University Press, 2004.

\bibitem[Carmona(2016)]{carmona2016lectures}
Ren{\'e} Carmona.
\newblock \emph{Lectures on BSDEs, stochastic control, and stochastic
  differential games with financial applications}, volume~1.
\newblock SIAM, 2016.

\bibitem[Casgrain and Jaimungal(2018{\natexlab{a}})]{casgrain2018algorithmic}
Philippe Casgrain and Sebastian Jaimungal.
\newblock Mean field games with partial information for algorithmic trading.
\newblock \emph{arXiv preprint arXiv:1803.04094}, 2018{\natexlab{a}}.

\bibitem[Casgrain and Jaimungal(2018{\natexlab{b}})]{casgrain2018mean}
Philippe Casgrain and Sebastian Jaimungal.
\newblock Mean-field games with differing beliefs for algorithmic trading.
\newblock \emph{arXiv preprint arXiv:1810.06101}, 2018{\natexlab{b}}.

\bibitem[Casgrain and Jaimungal(2018{\natexlab{c}})]{casgrain2018trading}
Philippe Casgrain and Sebastian Jaimungal.
\newblock Trading algorithms with learning in latent alpha models.
\newblock \emph{arXiv preprint arXiv:1806.04472}, 2018{\natexlab{c}}.

\bibitem[Cesa-Bianchi et~al.(2004)Cesa-Bianchi, Conconi, and
  Gentile]{cesa2004generalization}
Nicolo Cesa-Bianchi, Alex Conconi, and Claudio Gentile.
\newblock On the generalization ability of on-line learning algorithms.
\newblock \emph{IEEE Transactions on Information Theory}, 50\penalty0
  (9):\penalty0 2050--2057, 2004.

\bibitem[da~Silva and Gazeau(2018)]{da2018general}
Andr{\'e}~Belotto da~Silva and Maxime Gazeau.
\newblock A general system of differential equations to model first order
  adaptive algorithms.
\newblock \emph{arXiv preprint arXiv:1810.13108}, 2018.

\bibitem[Duchi et~al.(2011)Duchi, Hazan, and Singer]{duchi2011adaptive}
John Duchi, Elad Hazan, and Yoram Singer.
\newblock Adaptive subgradient methods for online learning and stochastic
  optimization.
\newblock \emph{Journal of Machine Learning Research}, 12\penalty0
  (Jul):\penalty0 2121--2159, 2011.

\bibitem[Gupta et~al.(2017)Gupta, Koren, and Singer]{gupta2017unified}
Vineet Gupta, Tomer Koren, and Yoram Singer.
\newblock A unified approach to adaptive regularization in online and
  stochastic optimization.
\newblock \emph{arXiv preprint arXiv:1706.06569}, 2017.

\bibitem[Jacod and Shiryaev(2013)]{jacod2013limit}
Jean Jacod and Albert Shiryaev.
\newblock \emph{Limit theorems for stochastic processes}, volume 288.
\newblock Springer Science \& Business Media, 2013.

\bibitem[Jankovi{\'c} et~al.(2012)Jankovi{\'c}, Jovanovi{\'c}, and
  Djordjevi{\'c}]{jankovic2012perturbed}
Svetlana Jankovi{\'c}, Miljana Jovanovi{\'c}, and Jasmina Djordjevi{\'c}.
\newblock Perturbed backward stochastic differential equations.
\newblock \emph{Mathematical and Computer Modelling}, 55\penalty0
  (5-6):\penalty0 1734--1745, 2012.

\bibitem[Kingma and Ba(2014)]{kingma2014adam}
Diederik~P Kingma and Jimmy Ba.
\newblock Adam: A method for stochastic optimization.
\newblock \emph{arXiv preprint arXiv:1412.6980}, 2014.

\bibitem[Krichene and Bartlett(2017)]{krichene2017acceleration}
Walid Krichene and Peter~L Bartlett.
\newblock Acceleration and averaging in stochastic descent dynamics.
\newblock In \emph{Advances in Neural Information Processing Systems}, pages
  6796--6806, 2017.

\bibitem[Krichene et~al.(2015)Krichene, Bayen, and
  Bartlett]{krichene2015accelerated}
Walid Krichene, Alexandre Bayen, and Peter~L Bartlett.
\newblock Accelerated mirror descent in continuous and discrete time.
\newblock In \emph{Advances in neural information processing systems}, pages
  2845--2853, 2015.

\bibitem[Lucas et~al.(2018)Lucas, Sun, Zemel, and Grosse]{lucas2018aggregated}
James Lucas, Shengyang Sun, Richard Zemel, and Roger Grosse.
\newblock Aggregated momentum: Stability through passive damping.
\newblock \emph{arXiv preprint arXiv:1804.00325}, 2018.

\bibitem[Ma et~al.(1999)Ma, Morel, and Yong]{ma1999forward}
Jin Ma, J-M Morel, and Jiongmin Yong.
\newblock \emph{Forward-backward stochastic differential equations and their
  applications}.
\newblock Number 1702. Springer Science \& Business Media, 1999.

\bibitem[Mertikopoulos and Staudigl(2018)]{mertikopoulos2018convergence}
Panayotis Mertikopoulos and Mathias Staudigl.
\newblock On the convergence of gradient-like flows with noisy gradient input.
\newblock \emph{SIAM Journal on Optimization}, 28\penalty0 (1):\penalty0
  163--197, 2018.

\bibitem[Nemirovsky and Yudin(1983)]{nemirovsky1983problem}
Arkadii~Semenovich Nemirovsky and David~Borisovich Yudin.
\newblock Problem complexity and method efficiency in optimization.
\newblock 1983.

\bibitem[Nesterov()]{nesterov27method}
Yu~Nesterov.
\newblock A method of solving a convex programming problem with convergence
  rate o (1/k2).
\newblock In \emph{Sov. Math. Dokl}, volume~27.

\bibitem[Pardoux and Tang(1999)]{pardoux1999forward}
Etienne Pardoux and Shanjian Tang.
\newblock Forward-backward stochastic differential equations and quasilinear
  parabolic pdes.
\newblock \emph{Probability Theory and Related Fields}, 114\penalty0
  (2):\penalty0 123--150, 1999.

\bibitem[Polyak(1964)]{polyak1964some}
Boris~T Polyak.
\newblock Some methods of speeding up the convergence of iteration methods.
\newblock \emph{USSR Computational Mathematics and Mathematical Physics},
  4\penalty0 (5):\penalty0 1--17, 1964.

\bibitem[Raginsky and Bouvrie(2012)]{raginsky2012continuous}
Maxim Raginsky and Jake Bouvrie.
\newblock Continuous-time stochastic mirror descent on a network: Variance
  reduction, consensus, convergence.
\newblock In \emph{2012 IEEE 51st IEEE Conference on Decision and Control
  (CDC)}, pages 6793--6800. IEEE, 2012.

\bibitem[Robbins and Monro(1951)]{robbins1951stochastic}
Herbert Robbins and Sutton Monro.
\newblock A stochastic approximation method.
\newblock \emph{The annals of mathematical statistics}, pages 400--407, 1951.

\bibitem[Ruder(2016)]{ruder2016overview}
Sebastian Ruder.
\newblock An overview of gradient descent optimization algorithms.
\newblock \emph{arXiv preprint arXiv:1609.04747}, 2016.

\bibitem[Su et~al.(2014)Su, Boyd, and Candes]{su2014differential}
Weijie Su, Stephen Boyd, and Emmanuel Candes.
\newblock A differential equation for modeling nesterov's accelerated gradient
  method: Theory and insights.
\newblock In \emph{Advances in Neural Information Processing Systems}, pages
  2510--2518, 2014.

\bibitem[Van~Handel(2007)]{van2007stochastic}
Ramon Van~Handel.
\newblock Stochastic calculus, filtering, and stochastic control.
\newblock \emph{Course notes., URL http://www. princeton. edu/\~{}
  rvan/acm217/ACM217. pdf}, 2007.

\bibitem[Vuckovic(2018)]{vuckovic2018kalman}
James Vuckovic.
\newblock Kalman gradient descent: Adaptive variance reduction in stochastic
  optimization.
\newblock \emph{arXiv preprint arXiv:1810.12273}, 2018.

\bibitem[Walrand and Dimakis(2006)]{walrand-dimakis-randomProcesses}
Jean Walrand and Antonis Dimakis.
\newblock Random processes in systems - lecture notes.
\newblock Department of Electrical Engineering and Computer Sciences,
  University of California, Berkeley CA 94720, August 2006.

\bibitem[Wibisono et~al.(2016)Wibisono, Wilson, and
  Jordan]{wibisono2016variational}
Andre Wibisono, Ashia~C Wilson, and Michael~I Jordan.
\newblock A variational perspective on accelerated methods in optimization.
\newblock \emph{Proceedings of the National Academy of Sciences}, 113\penalty0
  (47):\penalty0 E7351--E7358, 2016.

\bibitem[Wilson et~al.(2016)Wilson, Recht, and Jordan]{wilson2016lyapunov}
Ashia~C Wilson, Benjamin Recht, and Michael~I Jordan.
\newblock A lyapunov analysis of momentum methods in optimization.
\newblock \emph{arXiv preprint arXiv:1611.02635}, 2016.

\bibitem[Xu et~al.(2018{\natexlab{a}})Xu, Wang, and Gu]{xu2018accelerated}
Pan Xu, Tianhao Wang, and Quanquan Gu.
\newblock Accelerated stochastic mirror descent: From continuous-time dynamics
  to discrete-time algorithms.
\newblock In \emph{International Conference on Artificial Intelligence and
  Statistics}, pages 1087--1096, 2018{\natexlab{a}}.

\bibitem[Xu et~al.(2018{\natexlab{b}})Xu, Wang, and Gu]{xu2018continuous}
Pan Xu, Tianhao Wang, and Quanquan Gu.
\newblock Continuous and discrete-time accelerated stochastic mirror descent
  for strongly convex functions.
\newblock In \emph{International Conference on Machine Learning}, pages
  5488--5497, 2018{\natexlab{b}}.

\end{thebibliography}

\clearpage

%!TEX root = stochVarOptim.tex

\appendix

\footnotesize

\section{Obtaining Solutions to the Optimality FBSDE}
\label{sec:Solution-Techniques-for-Optimality-FBSDE}

\subsection{A Momentum-Based Representation of the Optimizer Dynamics}

Using a simple change of variables we may represent the dynamics of the FBSDE~\eqref{eq:Thm-Optimality-BSDE} in a simpler fashion, which will aid us in obtaining solutions to this system of equations. Let us define the \textit{momentum process} $p=(p_t)_{t\in[0,T]}$ as
\begin{equation} \label{eq:momentum-definition}
	p_t 
	= \left(\frac{\partial \L}{\partial \nu}\right)_t
	= e^{\gamma_t} \left(\nabla h(X_t^{\nust} + e^{-\alpha_t} \nust ) - \nabla h(X_t^{\nust})\right)
	\;.
\end{equation}
Noting that since $h$ is convex, we have the property that $\nabla \hst (x) = (\nabla h)^{-1} (x)$, we may use equation~\eqref{eq:momentum-definition} to write $\nust$ in terms of the momentum process as
\begin{equation} \label{eq:nu-as-function-of-momentum}
	\nust = e^{-\alpha_t} \left( \nabla \hst \left( \nabla h(X_t) + e^{-\gamma_t} p_t \right) - X_t \right)
	\;.
\end{equation}

The introduction of this process allows us to represent the solution to the optimality FBSDE~\eqref{eq:Thm-Optimality-BSDE}, and by extension the optimizer, in a much more tractable way. Re-writing~\eqref{eq:Thm-Optimality-BSDE} in terms of $p_t$, we find that
\begin{equation} \label{eq:momentum-dynamics}
	\left\{
	\begin{aligned}
	 	d p_t
	 	&= - \left\{ 
	 	e^{\gamma_t + \alpha_t + \beta_t} 
	 	\E\left[ \nabla f(X_t^{\nust}) {\big \lvert} \F_t \right] 
	 	+ \left( 
	 	e^{\gamma_t} \nabla^2 h(X_t) \, \nust_t
	 	- e^{\alpha_t} p_t
	 	\right)
	 	\right\} \, dt + d\M_t
	 	\\
	 	p_T &= 
	 	- e^{\delta_T} \E\left[ \nabla f ( X_T^{\nust} ) {\big \lvert} \F_T \right]
	\end{aligned}
	\right.
\end{equation}
where the dynamics of the forward process $X^{\nust}$ can be expressed as
\begin{equation}
	dX_t^{\nust} = 
	e^{\alpha_t} \left( \nabla \hst \left( \nabla h(X_t^{\nust}) + e^{-\gamma_t} p_t \right) - X_t^{\nust} \right) \, dt
\;.
\end{equation}
This particular change of variables corresponds exactly to the Hamiltonian representation of the optimizer's dynamics, which we show in Appendix~\ref{sec:Hamiltonian-Dynamics}.

Writing out the explicit solution to the FBSDE~\eqref{eq:momentum-dynamics}, we obtain a representation for the optimizer's dynamics as
\begin{equation} \label{eq:momentum-integral-rep}
	p_t = 
	\E\left[ 
	\int_t^T 
	e^{\gamma_u}
	\left\{ 
 	e^{\alpha_u + \beta_u} 
 	\nabla f(X_u^{\nust}) 
 	+ \left( 
 	\nabla^2 h(X_u) \, \nust_u
 	- e^{\alpha_u - \gamma_u} p_u
 	\right)
 	\right\} \,du \, 
   - e^{\delta_T} \nabla f ( X_T^{\nust} ) 
	\; {\Big \lvert} \F_t \right]
	\;,
\end{equation}
showing that optimizer's momentum can be represented as a time-weighted average of the expected future gradients over the remainder of the optimization and the term $e^{\gamma_t} \nabla^2 h(X_t) \, \nust_t - e^{\alpha_t} p_t$, where the weights are determined by the choice of hyperparameters $\alpha,\beta$ and $\gamma$. Noting that
\begin{equation} \label{eq:taylor-correction-term}
 	\nabla^2 h(X_t) \, \nust_t - e^{\alpha_t - \gamma_t} p_t
 	= 
 	\nabla^2 h(X_t) \nust_t -  \left(\frac{\nabla h(X_t + e^{-\alpha_t} \nust_t) - \nabla h(X_t)}{e^{-\alpha_t}}\right)
 	\;,
\end{equation} 
we find that the additional correction term in~\eqref{eq:momentum-integral-rep} can be interpreted as the remainder in the first-order Taylor expansion of the term $\nabla h (X_t + e^{-\alpha_t} \nust )$.

The representation~\eqref{eq:momentum-integral-rep} demonstrates optimizer does not only depend on the instantaneous value of gradients at the point $X_t^{\nust}$. Rather, we find that the algorithm's behaviour depends on the expected value of all future gradients that will be encountered over the remainder of the optimization process, projected onto the set of accumulated gradient information, $\F_t$. This is in stark contrast to most known stochastic optimization algorithms which only make explicit use of local gradient information in order to bring the optimizer towards an optimum. 
% \fTBD{Trim off stuff from this section}

\subsection{First-Order Singular Perturbation Approximation} 

When $h$ does not take the quadratic form $h(x)=\frac{1}{2} x^{\T} M x$ for some positive-definite matrix $M$, the nonlinear dynamics of the FBSDE~\eqref{eq:Thm-Optimality-BSDE} or in the equivalent momentum form~\eqref{eq:momentum-dynamics} make it difficult to provive a solution for general $h$. More precisely, the Taylor expansion term~\eqref{eq:taylor-correction-term} constitutes the main obstacle in obtaining solutions in general. 

In cases where the scaling parameter $\alpha_t$ is sufficiently large, we can assume that the Taylor expansion remainder term of equation~\eqref{eq:taylor-correction-term} will become negligibly small. Hence, we may approximate the optimality dynamics of the FBSDE~\eqref{eq:momentum-dynamics} by setting this term to zero. This can be interpreted as the first-order term in a singular perturbation expansion of the solution to the momentum FBSDE~\eqref{eq:momentum-dynamics}. 

Under the assumption that the Taylor remainder term vanishes, we obtain the approximation $\tpz=(\tpz)_{t\in[0,T]}$ for the momentum, which we present in the following proposition.

\begin{proposition}[First-Order Singular Perturbation (FOSP)] \label{prop:Perturbation-Solution}
	The linear FBSDE
	\begin{equation} \label{eq:Singular-Perturbation-FBSDE}
		\left\{
		\begin{aligned}
			&d \tpz_t
			=
			- e^{\gamma_t + \alpha_t + \beta_t} \, \E\left[ \nabla f\left(X_t\right)  \lvert \F_t \right]
			 \, dt + d\tMz_t
			\\ 
			&\tpz_T = - e^{\delta_T} \E\left[ \nabla f ( X_T^{\nust} ) {\big \lvert} \F_T \right]
		\end{aligned}
		\right.
		\;,
	\end{equation}
	admits a solution that can be expressed as
	\begin{equation} \label{eq:FOSP-Solution}
		\tpz_t
		=
		\E\left[\left.
		\int_{t}^{T}
		e^{\gamma_u + \alpha_u + \beta_u}   
		\, \nabla f\left( X_u \right) \, du
		- e^{\delta_T} \nabla f(X_T^{\nust} )
		\right\lvert  \F_t \right]
		\;,
	\end{equation}
	provided that $\E\left[
		\int_{0}^{T}
		e^{\gamma_u + \alpha_u + \beta_u}   
		\, \lVert \nabla f\left( X_u \right) \lVert \, du
		\right] < \infty$. 
\end{proposition}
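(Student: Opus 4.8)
The plan is to exploit the fact that the driver of the FBSDE~\eqref{eq:Singular-Perturbation-FBSDE} does not depend on the unknown pair $(\tpz,\tMz)$: it is the explicit $\F$-adapted process $e^{\gamma_t+\alpha_t+\beta_t}\,\E[\nabla f(X_t)\lvert\F_t]$. Consequently the backward equation decouples from its martingale part and reduces to a linear BSDE whose solution is the conditional expectation of its terminal data net of its accumulated running cost. I would therefore (i) take the right-hand side of~\eqref{eq:FOSP-Solution} as a candidate process, (ii) check that it meets the terminal condition, and (iii) exhibit the martingale $\tMz$ making the candidate satisfy the stated dynamics.

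Write $\phi_u := e^{\gamma_u+\alpha_u+\beta_u}$ and set $\xi := \int_0^T \phi_u\,\nabla f(X_u)\,du - e^{\delta_T}\nabla f(X_T^{\nust})$. The integrability hypothesis $\E\int_0^T \phi_u\lVert\nabla f(X_u)\rVert\,du<\infty$, together with integrability of the terminal gradient, guarantees $\xi\in L^1$, so that $N_t := \E[\xi\lvert\F_t]$ is a genuine (uniformly integrable) $\F$-martingale. Splitting $\int_t^T = \int_0^T - \int_0^t$, the candidate~\eqref{eq:FOSP-Solution} rewrites as
\[
	\tpz_t = N_t - \E\Big[\int_0^t \phi_u\,\nabla f(X_u)\,du \,\Big\lvert\,\F_t\Big]\;.
\]
Evaluating at $t=T$ gives $\tpz_T = \E[\,\xi - \int_0^T\phi_u\nabla f(X_u)\,du\,\lvert\,\F_T] = -e^{\delta_T}\,\E[\nabla f(X_T^{\nust})\lvert\F_T]$, which is exactly the prescribed terminal value.

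The crux is to identify the dynamics of the second term. I would invoke the commutation of the $\F$-optional projection with Lebesgue integration: since $I_t := \int_0^t\phi_u\nabla f(X_u)\,du$ is an absolutely continuous (continuous, finite-variation) process, its optional projection is again absolutely continuous, with
\[
	\E\Big[\int_0^t\phi_u\nabla f(X_u)\,du\,\Big\lvert\,\F_t\Big] = \int_0^t \phi_u\,\E[\nabla f(X_u)\lvert\F_u]\,du \quad\text{a.s.}
\]
Differentiating then yields $d\tpz_t = dN_t - \phi_t\,\E[\nabla f(X_t)\lvert\F_t]\,dt$, which is precisely~\eqref{eq:Singular-Perturbation-FBSDE} with martingale part $\tMz_t = N_t - N_0$. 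Uniqueness, if desired, follows by a standard linear-BSDE argument: the difference of two solutions has zero drift and zero terminal value, so conditioning forces both the backward process and its martingale increment to vanish.

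I expect the main obstacle to be the rigorous justification of this optional-projection / Fubini step — that $t\mapsto\E[I_t\lvert\F_t]$ is absolutely continuous with density $\phi_t\,\E[\nabla f(X_t)\lvert\F_t]$ — since this is what converts the conditional expectation of a time integral into a process carrying the correct $dt$-drift. It is here that the integrability hypothesis is genuinely needed, both to make $N$ a true martingale and to license the interchange of conditional expectation and integration; the remaining verifications are routine bookkeeping.
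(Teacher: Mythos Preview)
Your proposal is correct and follows essentially the same route as the paper: both obtain the solution as the conditional expectation of the terminal datum plus the integrated driver. The paper's proof is considerably terser --- it simply reads off the formula by specializing the general momentum integral representation~\eqref{eq:momentum-integral-rep} with the Taylor remainder term~\eqref{eq:taylor-correction-term} set to zero, and then only verifies that the integrability hypothesis makes the resulting expression finite and well-defined; your explicit martingale decomposition and optional-projection/Fubini verification of the $dt$-drift are more thorough than what the paper actually provides.
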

\begin{proof}
	Noting that the remainder term in the expression~\eqref{eq:taylor-correction-term} vanishes, we get that
\begin{equation} \label{eq:proof-solution-singular-perturbation}
	\tpz_t
	= 
	\E\left[
	\int_t^T
	e^{\gamma_u + \alpha_u +\beta_u} \, \nabla f\left(X_u\right)
	\, du
	-
	e^{\delta_T} \, \nabla f(X_T^{\nust})
	{\Big \lvert} \F_u \right]
	\;.
\end{equation}

Under the assumption that $\alpha,\beta,\delta,\gamma$ are continuous over $[0,T]$ and that $\E \| f(x) \|^2\| < \infty$, the right part of~\eqref{eq:proof-solution-singular-perturbation} is bounded. Now note that the integral on the left side of~\eqref{eq:proof-solution-singular-perturbation} is upper bounded for all $T$ by the integral provided in the integrability condition of Proposition~\ref{prop:Perturbation-Solution}, and therefore this condition is a sufficient condition for the expression~\eqref{eq:proof-solution-singular-perturbation} to be finite and well-defined.

\end{proof}

Although a general, model independent bound for the accuracy of such approximations is beyond the scope of this paper, it can still serve as a reasonable and computationally cheap alternative to attempting to solve the original problem dynamics directly with a BSDE numerical scheme. For more information on singular perturbation methods in the context of FBSDEs, see~\cite{jankovic2012perturbed}.

\subsection{Hamiltonian Representation of the Optimizer Dynamics}
\label{sec:Hamiltonian-Dynamics}

Just as in Hamiltonian classical mechanics, it is possible to express the optimality FBSDE of Theorem~\eqref{thm:Stochastic-Euler-Lagrange} with Hamiltonian equations of motion. We define the Hamiltonian $\H$ as the Legendre dual of $\L$ at, which can be written as
\begin{equation} \label{eq:hamiltonian-expression}
	\H(t,X,p)
	=
	\left\langle p \,  , \nust \right\rangle
	- \L(t,X,\nust)
\;,
\end{equation}
where $p = \frac{\partial \L}{\partial X}$.
Using the identity $D_h(x,y) = D_{\hst}(\nabla h(x), \nabla h(y))$, where $\hst$ is the Legendre dual of $h$, and inverting the expression for $\frac{\partial \L}{\partial X}$ in terms $p$, we may compute equation~\eqref{eq:hamiltonian-expression} as\footnote{See~\cite{wibisono2016variational}[Appendix B.4] for the full details of the computation.}
\begin{equation}
	\H(t,X,p) \label{eq:explicit-Hamiltonian-expression}
	=
	e^{\alpha_t+\gamma_t} D_{\hst} \left( \nabla h(X) + e^{-\gamma_t} p 
	\,,\, 
	\nabla h(X)
	\right)
	+ e^{\gamma_t + \beta_t} f(X_t)
	\;.
\end{equation}
Using this definition of $\H$, and using the FBSDE~\eqref{eq:Thm-Optimality-BSDE}, we obtain the following equivalent representation for the dynamics of the optimizer.

Using the simple substitution $p_t=\left(\frac{\partial \L}{\partial X}\right)_t$ and noting from equations~\eqref{eq:BSDE-dLdX} and~\eqref{eq:BSDE-dLdnu} that
\begin{equation}
	p_t = e^{\gamma_t} \left(\nabla h(X_t + e^{-\alpha_t} \nust_t) - \nabla h(X_t)\right)
	\;,
\end{equation}
a straightforward computation applied to the definition of $\H$ shows that the dynamics of the optimality FBSDE~\eqref{eq:Thm-Optimality-BSDE} admit the alternate Hamiltonian representation of the optimizer dynamics
\begin{equation} \label{eq:optimality-BSDE-h}
	dX_t = \left(\frac{\partial \H}{\partial p}\right)_t  dt
	\hspace{0.5em},\hspace{1.5em}
	d p_t
	=
	- \E\left[ \left(\frac{\partial \H}{\partial X}\right)_t {\Big \lvert} \F_t \right] \, dt - d\M_t
\end{equation}
along with the boundary condition $p_{T}=0$. 
% Noting here that \TBD{Finish this part.}

\section{The Discrete Kalman Filter}
\label{sec:Kalman-Filtering-Equations}

Here we present the reader to the Kalman Filtering equations used in Section~\ref{sec:Kalman-Algorithms-Section}. Consider the model presented in equations~\eqref{eq:Kalman-Filtering-Equations},
\begin{equation} \label{eq:Kalman-Filtering-Equations-2}
	y_{i,t_{k+1}} = \tA_k y_{i,t_k} + \tL_k w_{i,k}
	\;,\hspace{3em}
	g_{i,t_k} = b^{\T} y_{i,t_k} + \sigma e^{-\alpha_t} \xi_{i,k}
	\;,
\end{equation}
where we use the notation $\tA_k=(I-e^{-\alpha_{t_k}} A)$ and $\tL_k = L e^{-\alpha_t}$, and where $w_{i,k}$ and $\xi_{i,k}$ are all independent standard Gaussian random variables. We provide the Kalman filtering equations for this model in the following proposition.

\begin{proposition}[{\citet[Theorem 10.2]{walrand-dimakis-randomProcesses}}]
	Let $\yt_{i,k} = \E[ y_{t_k} {\lvert} \sigma(g_{t_{k^{\prime}}})_{k^{\prime}=1}^k ]$. Then $\yt_{i,k}$ satisfies the recursive equation
\begin{equation} \label{eq:Kalman-Filter-Expression-Discrete}
	\yt_{i,k} = \tA_k \yt_{i,k} + K_k \left( g_{i,k} - b^{\T} \tA_k \yt_{i,k} \right)
	\;,
\end{equation}
where the matrices $K_k$ are obtained via the independent recursive equations
\begin{align}
	{P}_{k\mid k-1} &= \tA_k {P}_{k-1\mid k-1} \tA_k^\T + \tL_k^{\T} \tL_k
	\,, \\
	{S}_k &= \sigma^{2} + b^{\T} {P}_{k\mid k-1}  b 
	\,, \\
	{K}_k &= {P}_{k\mid k-1} b \, {S}_k^{-1}
	\,, \\
	{P}_{k|k} &= \left({I} - {K}_k b^{\T}\right) {P}_{k|k-1} 
	\,.
\end{align}
\end{proposition}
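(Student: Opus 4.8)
The plan is to treat this as the classical discrete-time Kalman filter, since the model~\eqref{eq:Kalman-Filtering-Equations-2} is linear with jointly Gaussian noise terms $w_{i,k}$ and $\xi_{i,k}$. Because linear combinations of Gaussians are Gaussian, the pair consisting of the state $y_{i,t_k}$ and the observation history $(g_{i,t_{k'}})_{k'=1}^{k}$ is jointly Gaussian for every $k$; consequently the conditional law of $y_{i,t_k}$ given $\sigma(g_{t_{k'}})_{k'=1}^k$ is Gaussian and is fully determined by its conditional mean $\yt_{i,k}$ and its conditional covariance $P_{k\mid k}$. The conditional mean coincides with the minimum mean-square-error estimator, so I would characterize it through the orthogonality principle: the estimation error $y_{i,t_k}-\yt_{i,k}$ is uncorrelated with every observation up to time $k$. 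I would then carry out the derivation by induction on $k$, alternating a prediction step and an update step.

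For the prediction step, I would assume the inductive hypothesis that $\yt_{i,k-1}$ and $P_{k-1\mid k-1}$ describe the posterior at step $k-1$ and push them through the linear dynamics $y_{i,t_k}=\tA_k y_{i,t_{k-1}}+\tL_k w_{i,k}$. Taking conditional expectations and using the independence of $w_{i,k}$ from the past observations gives the one-step prediction $\hat y_{k\mid k-1}=\tA_k\yt_{i,k-1}$, and propagating the covariance yields $P_{k\mid k-1}=\tA_k P_{k-1\mid k-1}\tA_k^{\T}+\tL_k^{\T}\tL_k$, which is the first stated recursion.

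For the update step, I would introduce the innovation $g_{i,k}-b^{\T}\hat y_{k\mid k-1}$, i.e.\ the part of the new observation not predictable from the past. A direct computation shows its variance is $S_k=\sigma^2+b^{\T}P_{k\mid k-1}b$ and its covariance with the prediction error is $P_{k\mid k-1}b$. Applying the Gaussian conditioning formula (equivalently, projecting onto the innovation so that the updated error remains orthogonal to all past observations) produces the affine update with Kalman gain $K_k=P_{k\mid k-1}b\,S_k^{-1}$, giving the stated estimator recursion, while subtracting the explained covariance gives $P_{k\mid k}=(I-K_k b^{\T})P_{k\mid k-1}$.

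The main obstacle is purely the verification that the conditional expectation is affine in the observations and that the gain $K_k$ is precisely the one decorrelating the updated error from the past; both follow from joint Gaussianity together with the orthogonality characterization of conditional expectation in the Gaussian setting, after which the covariance Riccati recursion is obtained by bookkeeping. Since every ingredient is standard, in the paper itself I would simply invoke the cited result~\citet[Theorem 10.2]{walrand-dimakis-randomProcesses} rather than reproduce the full computation.
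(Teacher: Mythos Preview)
Your proposal is correct and matches the paper's approach exactly: the paper does not prove this proposition at all but simply states it as a citation to \citet[Theorem 10.2]{walrand-dimakis-randomProcesses}, and you reach the same conclusion in your final paragraph. The derivation sketch you outline (joint Gaussianity, prediction--update induction, innovation decomposition, orthogonality principle) is the standard textbook argument and is precisely what the cited reference contains.
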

For more information on the discrete Kalman filter, its derivation and for asymptotic properties, we refer the reader to the lecture notes~\cite{walrand-dimakis-randomProcesses}.

Next, we provide a result on the asumptotic properties of the Kalman filter in the proposition that follows.
\begin{proposition}[{\citet[Theorem 11.2]{walrand-dimakis-randomProcesses}}]
\label{prop:Asymptotic-Kalman-Dynamics}
	Assume that $\alpha_{t_k}=\alpha_{t_0}$ is constant, so that $\tA_k=\tA$ and $\tL_k = \tL$ become constant, and assume that there exists a positive-definite solution $K_{\infty}\in\R^{\dt \times \dt}$ to the algebraic matrix equation
\begin{equation}
	\tilde{K} = \tA \tilde{K} \tA^{\T} + \tL \tL^{\T}
	\;.
\end{equation}
Then, we may write the asymptotic dynamics of the filter $\yt_{i}$ as
\begin{equation}
	\yt_{i,k} = \tA \yt_{i,k} + K_{\infty} \left( g_{i,k} - b^{\T} \tA \yt_{i,k} \right)
	\;,
\end{equation}
where $K_{\infty}$ is the solution to the system of algebraic matrix equations
\begin{equation}
	K_{\infty} = (I-RC) S \,,\;\;
	R=S b \left(b^{\T} S b + \sigma^{2} \right)^{-1} \,,\;\;
	S=\tA K_{\infty} \tA^{\T} + \tL \tL^{\T}
	\,.
\end{equation}
\end{proposition}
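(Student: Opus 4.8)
The plan is to pass to the limit $k \to \infty$ in the time-varying discrete Kalman filter~\eqref{eq:Kalman-Filter-Expression-Discrete}, exploiting the fact that when $\alpha_{t_k} \equiv \alpha_{t_0}$ the system matrices $\tA_k \equiv \tA$ and $\tL_k \equiv \tL$ become constant, so that the filter becomes an autonomous (time-invariant) recursion. The only genuinely time-dependent object is then the error covariance $P_{k|k}$, whose evolution is governed by the discrete Riccati recursion obtained by composing the four update equations preceding the statement. Eliminating $P_{k|k-1}$, $S_k$ and $K_k$ in turn, this collapses to a single matrix difference equation $P_{k|k} = \mathcal{R}(P_{k-1|k-1})$, where $\mathcal{R}$ is the discrete Riccati operator associated with the pair $(\tA, b^\T)$ and the noise levels $(\tL \tL^\T, \sigma^2)$.

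First I would establish that the iterates $P_{k|k}$ converge to a fixed point of $\mathcal{R}$. The standard route is to use the monotonicity and concavity of the Riccati operator on the cone of positive semidefinite matrices: under stabilizability of $(\tA, \tL)$ and detectability of $(\tA, b^\T)$, the iterates remain bounded and converge to the unique stabilizing solution $P_\infty$ of the discrete algebraic Riccati equation. The hypothesis that a positive-definite solution $K_\infty$ to the stated algebraic system exists supplies precisely this steady-state covariance and guarantees $P_{k|k} \to P_\infty$.

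Having shown $P_{k|k} \to P_\infty$, I would take limits in the prediction step $P_{k|k-1} = \tA P_{k-1|k-1} \tA^\T + \tL^\T \tL$ and in the gain update $K_k = P_{k|k-1}\, b\, S_k^{-1}$. Writing $S = \tA K_\infty \tA^\T + \tL \tL^\T$ for the limiting predicted covariance and substituting into $S_k = \sigma^2 + b^\T P_{k|k-1} b$ and $K_k = P_{k|k-1} b\, S_k^{-1}$ yields the closed algebraic system $R = S b (b^\T S b + \sigma^2)^{-1}$ and $K_\infty = (I - R b^\T) S$ asserted in the proposition. Inserting the now-constant gain $K_\infty$ back into~\eqref{eq:Kalman-Filter-Expression-Discrete} gives the time-invariant filter recursion in the statement.

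The main obstacle is the convergence claim in the second step: showing that the Riccati iterates actually converge to the algebraic fixed point, rather than merely that a fixed point exists. This is the substantive content of steady-state Kalman filtering and rests on the stabilizability/detectability structure of the linearized state-space model, together with the comparison and monotonicity lemmas for $\mathcal{R}$. Since the result is classical, the cleanest route is to verify these structural hypotheses for the present model and then invoke~\citet[Theorem 11.2]{walrand-dimakis-randomProcesses} directly.
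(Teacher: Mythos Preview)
The paper does not prove this proposition at all: it is stated verbatim as a citation of \citet[Theorem~11.2]{walrand-dimakis-randomProcesses}, and immediately after the statement the paper simply refers the reader to that source for the derivation and theoretical properties. Your proposal is therefore strictly more than what the paper does---you sketch the classical Riccati-convergence argument (monotonicity of the Riccati operator, stabilizability/detectability, passage to the algebraic fixed point) before concluding that one should invoke the cited theorem. That final sentence of your proposal \emph{is} the entirety of the paper's ``proof.'' The intermediate steps you outline are correct in spirit and constitute the standard steady-state Kalman argument, but they are not present in the paper and are not needed for the comparison task here.
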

For more information on the Kalman Filter, its derivation and theoretical properties, see~\cite{walrand-dimakis-randomProcesses}.

% --------------------- Proofs Below This Point -------------------------------
\clearpage

\section{Proofs Relating to Theorem~\ref{thm:Stochastic-Euler-Lagrange}}
\label{sec:proof-thm-Stochastic-Euler-Lagrange}

Before going forward with the main part of the proof, we first present a lemma for the computation of the G\^ateaux derivative of $\J$.

\begin{lemma} \label{lemma:alternative-J-expr}
	The functional $\J$ is everywhere G\^ateaux differentiable in $\A$. The G\^ateaux at a point $\nu\in\A$ in the direction $\tomega = \omega - \nu$ for $\omega\in\A$ takes the form
	\begin{equation} \label{eq:proof-gateaux-deriv-expr-2}
		\left\langle D\J(\nu) , \tomega \right\rangle
		=
		\E\left[
		\int_0^{T}
		\left\langle \omega_t \,,\;
		\frac{\partial \L\left( t,X_t^{\nu}, \nu_t \right)}{\partial \nu}
		-
		\E\left[
		\int_t^{T}
		\frac{\partial \L\left( u,X_u^{\nu}, \nu_u \right)}{\partial X} 
		\, du
		- e^{\delta_T} \, \nabla f\left( X_T^{\nu} \right)
		\Big\lvert \F_t
		\right]
		\right\rangle
		\,dt
		\right]
		\;.
		\end{equation}
\end{lemma}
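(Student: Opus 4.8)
The plan is to compute the first variation of $\J$ directly as a limit of difference quotients, exploiting the fact that the state map $\nu \mapsto X^\nu$ is affine. Since $X_t^{\nu+\epsilon\tomega} = X_0 + \int_0^t(\nu_u + \epsilon\tomega_u)\,du = X_t^\nu + \epsilon\,\Theta_t$ with $\Theta_t := \int_0^t \tomega_u\,du$, I would substitute into \eqref{eq:Expected-Action-Functional}, differentiate $\J(\nu+\epsilon\tomega)$ at $\epsilon = 0$, pass the derivative through the expectation and the time integral, and apply the chain rule to the Bregman Lagrangian \eqref{eq:Bregman-Lagrangian} and to the endpoint penalty. This yields
\begin{equation*}
\langle D\J(\nu), \tomega\rangle = \E\Big[\int_0^T \Big(\big\langle (\tfrac{\partial\L}{\partial X})_t, \Theta_t\big\rangle + \big\langle(\tfrac{\partial\L}{\partial\nu})_t, \tomega_t\big\rangle\Big)dt + e^{\delta_T}\big\langle\nabla f(X_T^\nu), \Theta_T\big\rangle\Big],
\end{equation*}
where $(\partial\L/\partial X)_t$ and $(\partial\L/\partial\nu)_t$ are exactly the expressions \eqref{eq:BSDE-dLdX}--\eqref{eq:BSDE-dLdnu} obtained from the chain rule applied to $D_h$.

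Next, since $\Theta_t = \int_0^t \tomega_u\,du$, I would apply Fubini's theorem on the triangle $\{0\le u\le t\le T\}$ to move the accumulated perturbation onto the current direction,
\begin{equation*}
\int_0^T \big\langle(\tfrac{\partial\L}{\partial X})_t,\textstyle\int_0^t\tomega_u\,du\big\rangle\, dt = \int_0^T \big\langle \textstyle\int_t^T (\tfrac{\partial\L}{\partial X})_u\,du,\; \tomega_t\big\rangle\, dt,
\end{equation*}
and similarly rewrite $e^{\delta_T}\langle\nabla f(X_T^\nu),\Theta_T\rangle = \int_0^T e^{\delta_T}\langle\nabla f(X_T^\nu),\tomega_t\rangle\,dt$. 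Collecting terms leaves a single inner product against $\tomega_t$ under $\E\int_0^T$. The final step invokes the admissibility constraint: because $\omega,\nu\in\A$ are $\F$-adapted, $\tomega_t$ is $\F_t$-measurable, while $(\partial\L/\partial\nu)_t$ is itself $\F_t$-measurable; hence by the tower property I may insert a conditional expectation $\E[\,\cdot\mid\F_t]$ around the remaining terms $\int_t^T(\partial\L/\partial X)_u\,du$ and $\nabla f(X_T^\nu)$ (which involve the latent surface $f$ and so are not $\F_t$-measurable) without altering the outer expectation. This produces the stated form \eqref{eq:proof-gateaux-deriv-expr-2}.

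The main obstacle is the rigorous justification of interchanging the $\epsilon$-derivative with the expectation and the time integral, i.e.\ uniform integrability of the difference quotients $\epsilon^{-1}\big[\L(t, X_t^\nu+\epsilon\Theta_t, \nu_t+\epsilon\tomega_t) - \L(t,X_t^\nu,\nu_t)\big]$ and of the endpoint term. Here I would lean on the structure of $\A$: the square-integrability of $\omega$, $\nu$, and $\nabla f(X^\cdot)$, together with the $L$-Lipschitz smoothness and $C^2$ strict convexity of $h$, allow the Bregman contributions to be controlled by a mean-value bound on $\nabla h$ and the potential contribution by $\E\|\nabla f\|^2<\infty$. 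This furnishes an integrable dominating envelope, so dominated convergence applies and establishes Gâteaux differentiability at every $\nu\in\A$ in every admissible direction. The same integrability guarantees the Fubini interchange and the finiteness needed to condition, closing the argument.
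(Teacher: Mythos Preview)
Your proposal is correct and follows essentially the same route as the paper: differentiate $\J(\nu+\epsilon\tomega)$ using the affine state map, justify the interchange via the $L$-Lipschitz bound on $\nabla h$ together with the square-integrability built into $\A$, swap the order of integration on the triangle (the paper phrases this as ``integration by parts''), and then insert the conditional expectation via the tower property since $\tomega_t$ is $\F_t$-measurable. The only cosmetic difference is that your Fubini-on-the-triangle and the paper's integration-by-parts are two names for the same manipulation; note also that the inner product in the stated formula should carry $\tomega_t$ rather than $\omega_t$, as both your computation and the paper's own derivation confirm.
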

\begin{proof}
	If we assume that the conditions of Leibniz' rule hold, we may compute the G\^ateax derivative as 
	\begin{align}
		\partial_{\rho} \J\left(\nu+\rho\,\tomega\right)
		&=
		\partial_{\rho} \E\left[
		\int_0^T \L\left(t,X_t^{\nu + \rho\,\tomega}, \nu_t + \rho\,\tomega_t \right) \, dt
		+ e^{\delta_T} \left(f(X_T^{\nu + \rho\,\tomega}) - f(\xst) \right)
		\right] \nonumber
		\\ &=
		\E\left[
		\int_0^T 
		\partial_{\rho} 
		\L\left(t,X_t^{\nu + \rho\,\tomega}, \nu_t + \rho\,\tomega_t \right) \, dt
		+ e^{\delta_T} \partial_\rho f(X_T^{\nu + \rho\,\tomega})
		\right] \nonumber
		\\ &=
		\E\left[
		\int_0^T 
		\left\{
		\left\langle 
		\frac{\partial \L\left( t,X_t^{\nu}, \nu_t \right)}{\partial X} 
		\,,\;
		\int_0^t \tomega_u \, du
		\right\rangle
		+
		\left\langle 
		\frac{\partial \L\left( t,X_t^{\nu}, \nu_t \right)}{\partial \nu}
		\,,\;
		\tomega_t
		\right\rangle
		\right\}
		\,dt
		+ \left\langle \int_0^T \tomega_u \, du  , \Phi \nabla f(X_T^{\nu}) \right\rangle
		\right] 
		\label{eq:proof-leibniz-gateaux}
		\;,
	\end{align}
	where we have
	\begin{align}	
	\label{eq:BSDE-dLdX-thm-proof}
	\frac{\partial \L\left( t,X, \nu \right)}{\partial X} 
	&=
	e^{\gamma_t+\alpha_t} \mathLarger{(} \,
	\nabla h(X + e^{-\alpha_t} \nu )  - \nabla h(X)
	 -  e^{-\alpha_t} \nabla^2 h(X) \nu  - e^{\beta_t} \nabla f(X)
	\, \mathLarger{)}
	\\
	\frac{\partial \L\left( t,X, \nu \right)}{\partial \nu}
	&=
	e^{\gamma_t} \left(
	\nabla h(X + e^{-\alpha_t} \nu ) - \nabla h(X)
	\right)
	\;.
	\end{align}
	Note here that the derivative in $f$ is path-wise for every fixed realization of the function $f$. Since $f\in C^1$, we have that $\nabla f$ is also well-defined for every realization of $f$.

	To ensure that this computation is valid, and that the conditions of the Leibniz rule are met, due to the continuity of~\eqref{eq:proof-leibniz-gateaux} in $\tomega$, is sufficient for us to show that the integrals in equation~\eqref{eq:proof-leibniz-gateaux} are bounded for any $\tomega$ and $\nu$. First, note that by the Young and Jensen inequalities,
	\begin{align}
		\E\left[\left\langle \int_0^T \tomega_u \, du  , \Phi \nabla f(X_T^{\nu}) \right\rangle\right]
		&\leq
		\frac{1}{2}
		\E\left[ \int_0^T \|\tomega_u\|^2 \, du  +  \Phi \|\nabla f(X_T^{\nu}) \|^2
		\right] < \infty
		\;,
	\end{align}
	where the boundedness holds from the fact that $\tomega\in\A$ and that $\E \|f(x)\|^2 < \infty$ for all $x\in\R^d$.

	Next, we focus on the left part of equation~\eqref{eq:proof-leibniz-gateaux}.  By the Cauchy-Schwarz and Young inequalities, we have
	\begin{align}
		\left\lvert
		\left\langle 
		\frac{\partial \L\left( t,X_t^{\nu}, \nu_t \right)}{\partial X} 
		\,,\;
		\int_0^t \tomega_u \, du
		\right\rangle
		+
		\left\langle 
		\frac{\partial \L\left( t,X_t^{\nu}, \nu_t \right)}{\partial \nu}
		\,,\;
		\tomega_t
		\right\rangle
		\right\rvert
		&\leq
		\left\| \frac{\partial \L\left( t,X_t^{\nu}, \nu_t \right)}{\partial X} \right\| 
		\left\| \int_0^t \tomega_u \, du  \right\|
		+
		\left\|\frac{\partial \L\left( t,X_t^{\nu}, \nu_t \right)}{\partial \nu} \right\|
		\|\tomega_t\|
		\\ &\leq
		\frac{1}{2}\left\{
		\left\| \int_0^t \tomega_u \, du  \right\|^2
		+
		\left\| \frac{\partial \L\left( t,X_t^{\nu}, \nu_t \right)}{\partial X} \right\|^2
		+
		\| \frac{\partial \L\left( t,X_t^{\nu}, \nu_t \right)}{\partial \nu} \|^2
		+
		\|\tomega_t\|^2
		\right\}
		\;.  \label{eq:proof-integrand-Gateaux}
 	\end{align}
	Using the $L$-Lipschitz property of the gradients of $h$, we can also bound the partial derivatives of the Lagrangian with the triangle inequality as
	\begin{align*}
		\left\| \frac{\partial \L\left( t,X_t^{\nu}, \nu_t \right)}{\partial X} \right\| 
		&\leq 
		e^{\gamma_t+\alpha_t} \left\| \nabla h(X + e^{-\alpha_t} \nu )  - \nabla h(X) \right\| + 
		e^{\gamma_t} \| \nabla^2 h(X) \nu \| + 
		e^{\beta_t+\gamma_t+\alpha_t} \| \nabla f(X)\|
		\\ &\leq
		L (e^{\gamma_t+\alpha_t} + e^{\gamma_t}) \| \nu \| 
		+ e^{\beta_t+\gamma_t+\alpha_t} \| f(X)\|
		\\ &\leq 
		C _0\left( \| \nu \| + \| \nabla f(X)\| \right)
		\\ 
		\left\| \frac{\partial \L\left( t,X, \nu \right)}{\partial \nu} \right\|
		&\leq
		e^{\gamma_t} \left\|
		\nabla h(X + e^{-\alpha_t} \nu ) - \nabla h(X)
		\right\|
		\\ &\leq
		e^{\gamma_t} L \left\| \nu \right\|
		\\ &\leq
		C_0 \, \left\| \nu \right\|
		\;,
	\end{align*}
	where $C_0 = \sup_{t\in [0,T]} \{ e^{\alpha_t + \gamma_t} + e^{\gamma_t} + e^{\alpha_t+\gamma_t+\beta_t}\}$ is bounded by the assumption that $\alpha,\beta,\gamma$ are continuous in $[0,T]$.
	
	Using the above result, and applying Young's inequality to the previous result, we can upper bound equation~\eqref{eq:proof-integrand-Gateaux} as
	\begin{align}
		\text{\eqref{eq:proof-integrand-Gateaux}}
		&\leq 
		32 \, (1 + C) \, \left\{ 1 +
		\int_0^T \| \tomega_u \|^2 \, du
		+
		\| \nu_t \|^2
		+
		\|\tomega_t\|^2
		+
		\| \nabla f\left(X_t\right)\|^2
		\right\}
		\\&\leq 
		64 \, (1 + C) \, \left\{ 1 +
		\int_0^T \| \omega_u \|^2 \, du
		+
		\int_0^T \| \nu_u \|^2 \, du
		+
		\| \nu_t \|^2
		+
		\|\omega_t\|^2
		+
		\| \nabla f\left(X_t\right)\|^2
		\right\}
		\label{eq:upper-bound-integrand}
		 \;,
	\end{align}
	where the number 32 is chosen to be much larger than what is strictly necessary by Young's inequality.
	Notice here that by the definition of $\A$, this forms an integrable upper bound to the left integral of equation~\eqref{eq:proof-leibniz-gateaux}, validating our use of Leibniz's rule, and showing that $\J$ is indeed G\^ateaux integrable.

	Now that integrability concerns have been dealt with, we can proceed with the computation of the G\^ateaux derivative. By applying integration by parts to the left side of equation~\eqref{eq:proof-gateaux-deriv-expr} and moving the right hand side into the integral, we obtain
	\begin{equation*}
		\partial_{\rho} \J\left(\nu+\rho\,\tomega\right)
		=
		\E\left[
		\int_0^T 
		\left\langle \tomega_t \,,\;
		\frac{\partial \L\left( t,X_t^{\nu}, \nu_t \right)}{\partial \nu}
		-
		\int_t^T
		\frac{\partial \L\left( u,X_u^{\nu}, \nu_u \right)}{\partial X} 
		\, du
		- e^{\delta_T} \nabla f(X_T^{\nu})
		\right\rangle
		\,dt
		\right]
	\end{equation*}
	Using the tower property and Fubini's theorem on the right, we get
	\begin{equation} \label{eq:proof-gateaux-deriv-expr}
	\left\langle D\J(\nu) , \tomega \right\rangle
	=
	\E\left[
	\int_0^T 
	\left\langle \tomega_t \,,\;
	\frac{\partial \L\left( t,X_t^{\nu}, \nu_t \right)}{\partial \nu}
	-
	\E\left[
	\int_t^T
	\frac{\partial \L\left( u,X_u^{\nu}, \nu_u \right)}{\partial X} 
	\, du
	+ e^{\delta_T} \nabla f(X_T^{\nu}) \,
	\Big\lvert \F_t
	\right]
	\right\rangle
	\,dt
	\right]
	\;,
	\end{equation}
	as desired.
\end{proof}
\vspace{3em}

\subsection{Proof of Theorem~\ref{thm:Stochastic-Euler-Lagrange}}
Using the representation of the G\^ateux derivative of $\J$ brought forth by Lemma~\ref{lemma:alternative-J-expr}, we may proceed with the proof of Theorem~\ref{thm:Stochastic-Euler-Lagrange}.

\begin{proof}[Proof of Theorem~\ref{thm:Stochastic-Euler-Lagrange}]
	
	The goal is to show that the BSDE~\eqref{eq:Thm-Optimality-BSDE} is a necessary and sufficient condition for $\nust$ to be a critical point of $\J$. For any G\^ateaux differentiable function $\J$, a necessary and sufficient condition for a point $\nust\in\A$ to be a critical point is that its G\^ateaux derivative vanished in any valid direction. Lemma~\ref{lemma:alternative-J-expr} shows that the G\^ateaux derivative takes the form of equation~\eqref{eq:proof-gateaux-deriv-expr-2}. Therefore, all that remains is to show that the FBSDE~\ref{eq:Thm-Optimality-BSDE} is a necessary and sufficient condition for equation~\eqref{eq:proof-gateaux-deriv-expr-2} to vanish.

	\textbf{Sufficiency. }
	We will show that equation~\eqref{eq:proof-gateaux-deriv-expr-2} vanishes when the FBSDE~\eqref{eq:Thm-Optimality-BSDE} holds. Assume that there exists a solution to the FBSDE~\eqref{eq:Thm-Optimality-BSDE} satisfying $\nust\in\A$. We may then express the solution to the FBSDE explicitly as
	\begin{equation*}
		\left(\frac{\partial \L}{\partial \nu}\right)_t = \E\left[ \int_t^T \left(\frac{\partial \L}{\partial X}\right)_u \, du \, - e^{\delta_T} \nabla f(X_T^{\nu}) \, \Big\lvert \F_t \right] 
		\;.
	\end{equation*}
	Inserting this into the right side of~\eqref{eq:proof-gateaux-deriv-expr-2}, we find that $\left\langle D\J(\nu) , \omega \right\rangle$ vanishes for all $\omega\in\A$, demonstrating sufficiency.

	\textbf{Necessity. }
	Conversely, let us assume that $\left\langle D\J(\nu) , \omega - \nu \right\rangle = 0$ for all $\omega\in\A$ and for some $\nu\in\A$ for which the FBSDE~\eqref{eq:Thm-Optimality-BSDE} is not satisfied. We will show by contradiction that this statement cannot hold by choosing a direction in which the G\^ateax derivative does not vanish. Consider the choice
	\begin{equation} \label{eq:proof-gateaux-deriv-counterexample}
		\omega_t^{\rho} =
		\nu_t + \rho \,\left(
		\frac{\partial \L\left( t,X_t^{\nu}, \nu_t \right)}{\partial \nu}
		-
		\E\left[
		\int_t^T
		\frac{\partial \L\left( u,X_u^{\nu}, \nu_u \right)}{\partial X} 
		\, du - e^{\delta_T} \nabla f(X_T^{\nu})
		\Big\lvert \F_t \right]
		\right)
		\;,
	\end{equation}
	for some sufficiently small $\rho>0$. We will first show that $\omega^\rho \in \A$ for some $\rho>0$.

	First, note that clearly $\omega^{\rho}$ must be $\F_t$-adapted, and we have $\omega^{0} = \nu_t$. Moreover, note that since $\nu\in\A$, we have that $\E\int_0^T \, \lVert \nu_t \lVert^2 + \lVert \nabla f( X^{\nu} ) \lVert^2 \, dt < \infty$, that $\omega^{0}=\nu$. Notice that by the continuity of $\nabla f$ and the definition of $X$, the expression
	\begin{equation} \label{eq:admissible-condition-rho}
		\E \int_0^T \, \lVert \omega^{\rho}_t \lVert^2 + \lVert \nabla f( X^{\omega^{\rho}} ) \lVert^2 \, dt
	\end{equation}
	is continuous in $\rho$. Since \eqref{eq:admissible-condition-rho} is bounded for $\rho=0$, by continuity there exists some $\rho > 0$ for which \eqref{eq:admissible-condition-rho} is bounded and by extension where $\omega^{\rho}\in\A$ for this same value of $\rho$.

	Inserting~\eqref{eq:proof-gateaux-deriv-counterexample} into the G\^ateaux derivative~\eqref{eq:proof-gateaux-deriv-expr-2}, we get that
	\begin{equation}
	\left\langle D\J(\nu) , \omega^{\rho} - \nu \right\rangle
	=
	\rho \;
	\E \left[ \int_0^T 
	\left\lVert
	\frac{\partial \L\left( t,X_t^{\nu}, \nu_t \right)}{\partial \nu}
	-
	\E\left[
	\int_t^T
	\frac{\partial \L\left( u,X_u^{\nu}, \nu_u \right)}{\partial X} 
	\, du
	- e^{\delta_T} \nabla f(X_T^{\nu}) \, 
	\Big\lvert \F_t \right]
	\right\rVert^2
	\, dt
	\right]
	\;,
	\end{equation}
	which is strictly positive unless the FBSDE~\eqref{eq:Thm-Optimality-BSDE} is satisfied, thus forming a contradiction and demonstrating that the condition is necessary.
\end{proof}	

\section{Proof of Theorem~\ref{thm:convergence-rate}}
\label{sec:proof-thm-convergence-rate}
\begin{proof}
{ \footnotesize

The proof of this theorem is broken up into multiple parts. The idea will be to first show that the energy functional $\mcE$ is a super-martingale with respect to $\F_t$, and then to use this property to bound the expected distance to the optimum. Lastly, we bound a quadratic co-variation term which appears within these equations to obtain the final result.

Before delving into the proof, we introduce standard notation for semi-martingale calculus. We use the noation $dY_t = dY_t^c + \Delta Y_t$ to indicate the increments of the continuous part $Y^c$ of a process $Y$ and its discontinuities $\Delta Y_t = Y_t - Y_{t-}$, where we use the notation $t-$ to indicate the left limit of the process. We use the notation $[Y,Z]_t$ to represent the quadratic co-variation of two processes $Y$ and $Z$. This quadratic variation term can be decomposed into $d[Y,Z]_t = d[Y,Z]_t^c + \langle \Delta Y_t , \Delta Z_t\rangle$, where $[Y,Z]_t^c$ represents the quadratic covariation between $Y^c$ and $Z^c$, and where $\langle \Delta Y_t , \Delta Z_t\rangle$ represents the inner product of their discontinuities at $t$. For more information on semi-martingale calculus and the associated notation, see Jacod and Shiryaev~\cite[Sections 3-5]{jacod2013limit}.

\textbf{Dynamics of the Bregman Divergence.} The idea will now be to show that the energy functional $\mcE$, defined in equation~\eqref{eq:Energy-Functional}, is a super-martingale with respect to the visible filtration $\F_t$. 

Using It\^o's formula and It\^o's product rule for c\`adl\`ag semi-martingales~\cite{jacod2013limit}[Theorem 4.57], as well as the short-hand notation $Y_t = X_t + e^{-\alpha_t} \nust_t$, we obtain
{ \scriptsize
\begin{equation*}
	\begin{aligned}
		dD_h(\xst,Y_t) &=
		-\left\{
		\langle \nabla h(Y_t), dY_t^c \rangle
		+ \frac{1}{2} \sum_{i,j=1}^d \frac{\partial^2 h(Y_t)}{\partial x_i \partial x_i} d\left[ Y_i, Y_j \right]_t^c
		+ \Delta h(Y_t)
		\right\}
		- \left\{ 
		\vphantom{\sum_{i,j=1}^d}
		\langle d \nabla h(Y_t) , \xst - Y_t\rangle
		-
		\langle \nabla h(Y_t) , dY_t\rangle
		- d\left[ \nabla h(Y) , Y \right]_t
		\right\}
		\\ &=
		-\left\{
		\langle \nabla h(Y_t), -\Delta Y_t \rangle
		+ \frac{1}{2} \sum_{i,j=1}^d \frac{\partial^2 h(Y_t)}{\partial x_i \partial x_i} d\left[ Y_i, Y_j \right]_t^c
		+ \Delta h(Y_t)
		\right\}
		- \left\{
		\langle d \nabla h(Y_t) , \xst - Y_t\rangle
		- \sum_{i,j=1}^d \frac{\partial^2 h(Y_t)}{\partial x_i \partial x_i} d\left[ Y_i, Y_j \right]_t^c 
		- \langle \Delta \left( \nabla h(Y_t) \right) , \Delta Y_t \rangle
		\right\}
		\\ &
		= -
		\left\{
		\Delta h(Y_t) -
		 \langle \nabla h(Y_t) , \Delta Y_t \rangle
		\right\}
		- \langle d \nabla h(Y_t) , \xst - Y_t\rangle
		+ \left\{ \frac{1}{2} \sum_{i,j=1}^d \frac{\partial^2 h(Y_t)}{\partial x_i \partial x_i} d\left[ Y_i, Y_j \right]_t^c + 
		\langle \Delta \left( \nabla h(Y_t) \right) , \Delta Y_t \rangle \right\}
		\;,
	\end{aligned}
\end{equation*}
}where from line 1 to 2, we use the identity $d[\nabla g(Y),Y]_t=\sum_{i,j} \frac{\partial^2 g(Y_t )}{\partial x_i \partial x_j} d[Y_i,Y_j]_t^c + \langle \Delta( \nabla g(Y_t)), \Delta Y_t \rangle$ for any $C^2$ function $g$.

Note that since $h$ is convex, $\nabla^2 h$ must have positive eigenvalues, and hence $\frac{1}{2} \sum_{i,j=1}^d \frac{\partial^2 h(Y_t)}{\partial x_i \partial x_i} d\left[ Y_i, Y_j \right]_t^c \geq 0$. The convexity of $h$ also implies that $\langle \nabla h(x) - \nabla h(y), x-y\rangle \leq 0$, and therefore we get $\langle \Delta \left( \nabla h(Y_t) \right) , \Delta Y_t \rangle \geq 0$. The convexity of $h$ also implies that $\Delta h(Y_t) -\langle \nabla h(Y_t) , \Delta Y_t \rangle \geq 0$. Combining these observations, we find that
\begin{align}
		dD_h(\xst,Y_t) &\leq
		- \langle d \nabla h(Y_t) , \xst - Y_t\rangle
		+ \left\{ \sum_{i,j=1}^d \frac{\partial^2 h(Y_t)}{\partial x_i \partial x_i} d\left[ Y_i, Y_j \right]_t^c + 
		\langle \Delta \left( \nabla h(Y_t) \right) , \Delta Y_t \rangle \right\}
		\\ &=
		- \langle d \nabla h(Y_t) , \xst - Y_t\rangle
		+ [ \nabla h(Y), Y ]_t
		\;.
		\label{eq:dynamics-upper-bound}
\end{align}

\textbf{Super-martingale property of $\mcE$. }
Applying the scaling conditions to the optimality FBSDE~\eqref{eq:Thm-Optimality-BSDE}, we obtain the dynamics
\begin{equation}
	d  \nabla h(X_t^{\nust} + e^{-\alpha_t} \nust )
	 	= - e^{\alpha_t + \beta_t} \E\left[ \nabla f(X_t^{\nust}) {\big \lvert} \F_t \right] \, dt + d\tM_t
	 	\;.
\end{equation}
Inserting this in to the dynamics of for the energy functional, and applying the upper bound~\eqref{eq:dynamics-upper-bound}, we find that
\begin{align} \label{eq:proof-energy-dynamics-simplified-1}
	 d\mcE_t \leq&
	 - \langle d \nabla h(Y_t) , \xst - Y_t \rangle
	 + \dot \beta_t e^{\beta_t} \left( f(X_t) - f(\xst) \right) \, dt + 
	e^{\beta_t} \left\langle \nabla f(X_t), \nu_t \right\rangle \, dt
	\\
	=
	&\left\langle e^{\alpha_t + \beta_t } \E[\nabla f\left(X_t\right) \lvert \F_t ] \, dt - d\M_t
	\;,\;
	\xst - Y_t
	\right\rangle
	+ \dot \beta_t e^{\beta_t} \left( f(X_t) - f(\xst) \right) \, dt + 
	e^{\beta_t} \left\langle \nabla f(X_t), \nu_t \right\rangle \, dt
	\\
	=&
	- \left\{
	D_f(\xst, Y_t)
	+ \left(  e^{\alpha_t} - \dot \beta_t \right) e^{\beta_t} \left( f(X_t) - f(\xst)\right)\right\} \, dt
	+ d\M_t^{\prime}
	\;, \label{eq:proof-energy-dynamics-simplified-2}
\end{align}
where we use the notation $\M^\prime_t$ to represent the $\F_t$-martingale defined as
\begin{equation} 
	d\M_t^{\prime} = 
	\left\langle  
	e^{\alpha_t + \beta_t } \left(\E[\nabla f\left(X_t\right) \lvert \F_t ] - f(X_t) \right)\, dt - d\M_t
	\;,\;
	\xst - Y_t
	\right\rangle
	\;.
\end{equation}

Now note that due to the assumed convexity of $f$, we have that $D_f(\xst, Y_t)$ is almost surely non-negative. Second, by the scaling conditions, $e^{\alpha_t} - \dot \beta_t$ is positive. Hence, the drift in equation~\eqref{eq:proof-energy-dynamics-simplified-2} is almost surely negative, and $\mcE_t$ is a super-martingale.

Using the super-martingale property, we find that $\E\left[ \mcE_t \right] \leq \E\left[ \mcE_0 \right] 
		 = \E \left[ D_h(\xst,X_0 + e^{-\alpha_0} \nu_0) + e^{\beta_0} \left( f(X_0) - f(\xst)\right)\right]
		= C_0
		\;,$ where $C_0 \geq 0$. Using the definition of $\mcE$, and using the fact that $D_h\geq 0$ if $h$ is convex, we obtain
\begin{equation} \label{eq:proof-energy-dynamics-simplified-3}
 	e^{\beta_t} \E\left[ \left( f(X_t) - f(\xst)\right)\right]
 	\leq
	\E \left[ D_h(\xst,X_t + e^{-\alpha_t} \nu_t) + e^{\beta_t} \left( f(X_t) - f(\xst)\right) \right] \leq 
	C_0 + \E\left[ \, [ \nabla h(Y), Y  ]_t \, \right]
	\;.
\end{equation}

\textbf{Upper bound on the Quadratic Co-variation. }
Now we upper bound the quadratic co-variation term appearing on the right hand side of~\eqref{eq:proof-energy-dynamics-simplified-3}. Using the further change of variable $Z_t = \nabla h(Y_t)$, and noting that by the assumed convexity of $h$ that $\nabla \hst (x) = (\nabla h)^{-1}(x)$, we get $[ \nabla h(Y), Y  ]_t = [Z,\nabla \hst (Z)]_t$. 

Assuming that $\nabla h$ is $\mu$-strongly convex, we get that $\nabla \hst$ must have $\mu^{-1}$-Lipschitz smooth gradients. This implies that (i) the eigenvalues of $\nabla^2 \hst$ must be bounded above by $\mu^{-1}$ (ii) from the Cauchy-Schwarz inequality, we have $\langle \nabla \hst (x) - \nabla \hst(y), x - y\rangle \leq \mu^{-1} \|x-y\|^2$. Using these two observations and writing out the expression for $[Z,\nabla \hst (Z)]_t$, we get
\begin{align}
	[Z,\nabla \hst (Z)]_t
	&=
	\sum_{i,j=1}^d \frac{\partial^2 h(Y_t)}{\partial x_i \partial x_i} d\left[ Y_i, Y_j \right]_t^c
	+ \langle \Delta (\nabla \hst (Z)) , \Delta Z_t \rangle
	\\ &\leq 
	\mu^{-1} [Z]_t
	\;.
\end{align}
Moreover, note that since $Z_t = \nabla h(X_t^{\nust} + e^{-\alpha_t} \nust_t)$ and since $\nabla h (X_t^{\nust})$ is a process of finite variation, the optimality dynamics~\eqref{eq:Thm-Optimality-BSDE} imply that $[Z]_t = [ e^{-\gamma_t} \M ]_t] = e^{-\gamma_t} [ \M ]_t $ 

Inserting the quadratic co-variation bound into equation~\eqref{eq:proof-energy-dynamics-simplified-3} and using the super-martingale property, we obtain the final result
\begin{align*}
	\E\left[ \left( f(X_t) - f(\xst)\right)\right]
 	&\leq e^{-\beta_t}
 	\left(
 	C_0 + \frac{1}{2} \E\left[ \, [ \nabla h ( X + e^{-\alpha_t} \nu ), \nu ]_t\right]
 	\right)
 	\\&
 	\leq e^{-\beta_t}
 	\left(
 	C_0 + \frac{1}{2} e^{-2\gamma_t} \E\left[ \, [ \mcM ]_t\right]
 	\right)
 	\\ &\leq
 	(C_0 + \frac{1}{2}) e^{-\beta_t} \max\left\{ 1 \,,\, e^{-2\gamma_t} \E\left[ \, [ \mcM ]_t\right] \right\}
 	\\ &= O \left(
 	 	 	 e^{-\beta_t} \max\left\{ 1 \,,\, e^{-\beta_t + 2\gamma_t} \E\left[ \, [ \mcM ]_t\right] \right\} \right)
	\;,
\end{align*}
as desired.
}
\end{proof}

\section{Proofs of Propositions~\ref{prop:FOSP-Martingale-SGD-cts-Solution} and Proposition~\ref{prop:perturbation-solution-filter}}

Both of the proofs contained in this sections are applications of the momentum representation of the optimizer dynamics, and the FOSP approximation to the solution of the optimality FBSDE~\eqref{eq:Thm-Optimality-BSDE}.

\subsection{Proof of Proposition~\ref{prop:FOSP-Martingale-SGD-cts-Solution}}
\label{sec:proof-prop-FOSP-Martingale-SGD-cts-Solution}

\begin{proof}
	Using Proposition~\ref{prop:Perturbation-Solution}, we find that the solution to the FOSP takes the form 
	\begin{equation*}
		\tpz_t
		=
		\E\left[\left.
		\int_{t}^{T}
		e^{\gamma_t + \alpha_t + \beta_t}   
		\, \nabla f\left( X_u \right) \, du
		- e^{\delta_T} \nabla f(X_T^{\nust} )
		\right\lvert  \F_t \right]
		\;.
	\end{equation*}
	Applying Fubini's theorem, and the martingale property of $\E\left[ \nabla f\left( X_u \right) \lvert \F_u \right] = \nicefrac{g_u}{(1+\rho^2)}$, we find that
	\begin{align*}
		\tpz_t
		&=
		\E\left[\left.
		\int_{t}^{T}
		e^{\gamma_u + \alpha_u + \beta_u}   
		\, \nabla f\left( X_u \right) \, du
		- e^{\delta_T} \nabla f(X_T^{\nust} )
		\right\lvert  \F_t \right]
		\\ &=
		\int_{t}^{T}
		e^{\gamma_u + \alpha_u + \beta_u}   
		\, \E\left[ \nabla f(X_u^{\nust} ) \lvert \F_t \right] \, du
		- e^{\delta_T}  \E\left[ \nabla f(X_T^{\nust} ) \lvert \F_t \right]
		\\ &=
		\int_{t}^{T}
		e^{\gamma_u + \alpha_u + \beta_u}   
		\, \E\left[ \nabla f(X_t^{\nust} ) \lvert \F_t \right] \, du
		- e^{\delta_T}  \E\left[ \nabla f(X_t^{\nust} ) \lvert \F_t \right]
		\\ &=
		g_t (1+\rho^2)^{-1}
		\left(\int_{t}^{T}
		 		 		e^{\gamma_u + \alpha_u + \beta_u}   
		 		 		 \, du
		 		 		- e^{\delta_T}\right)
		 		 		\;.
	\end{align*}
	Inserting expression above into equation~\eqref{eq:nu-as-function-of-momentum}, and re-arranging terms, we obtain the desired result.
\end{proof}

\subsection{Proof of Proposition~\ref{prop:perturbation-solution-filter}}
\label{sec:proof-prop-perturbation-solution-filter}

\begin{proof}
		Using Proposition~\ref{prop:Perturbation-Solution}, we find that the solution to the FOSP takes the form 
	\begin{equation*}
		\tpz_t
		=
		\E\left[\left.
		\int_{t}^{T}
		e^{\gamma_t + \alpha_t + \beta_t}   
		\, \nabla f\left( X_u \right) \, du
		- e^{\delta_T} \nabla f(X_T^{\nust} )
		\right\lvert  \F_t \right]
		\;.
	\end{equation*}
	Applying Fubini's theorem, and noting that $\E[\, \nabla_i f(X_{t+h}) \lvert y_{i,t} ] = \sum_{j=1}^{\dt} ( b^{\T} e^{-A h} )_{j} \, y_{\cdot,j,t}$, we obtain
	\begin{align*}
		\tpz_t
		&=
		\E\left[\left.
		\int_{t}^{T}
		e^{\gamma_u + \alpha_u + \beta_u}   
		\, \nabla f\left( X_u \right) \, du
		- e^{\delta_T} \nabla f(X_T^{\nust} )
		\right\lvert  \F_t \right]
		\\ &=
		\int_{t}^{T}
		e^{\gamma_u + \alpha_u + \beta_u}   
		\, \E\left[ \nabla f(X_u^{\nust} ) \lvert \F_t \right] \, du
		- e^{\delta_T}  \E\left[ \nabla f(X_T^{\nust} ) \lvert \F_t \right]
		\\ &=
		\int_{t}^{T}
		e^{\gamma_u + \alpha_u + \beta_u}   
		\, 
		\left( \sum_{j=1}^{\dt} ( b^{\T} e^{-A (u-t)} )_{j} \, y_{\cdot,j,t} \right)\, du
		- e^{\delta_T} \left( \sum_{j=1}^{\dt} ( b^{\T} e^{-A (T-t)} )_{j} \, y_{\cdot,j,t} \right)
		\\ &=
		\sum_{j=1}^{\dt} 
		\left(
		\int_{t}^{T}
		e^{\gamma_u + \alpha_u + \beta_u}   
		\,
		\left( b^{\T} e^{-A (u-t)} \right)_{j} \, du
		-
		e^{\delta_T} ( b^{\T} e^{-A (T-t)} )_{j} 
		\right)  y_{\cdot,j,t}
	\end{align*}
	Inserting expression above into equation~\eqref{eq:nu-as-function-of-momentum}, and re-arranging terms, we obtain the desired result.
\end{proof}

\end{document}